\newcommand{\TRONLY}[1]{#1}
\newcommand{\ARTONLY}[1]{}
  \newcommand{\hide}[1]{}
\newcommand{\nent}{\ensuremath{\mathrel{\mbox{$|\hspace*{-.48em}\approx$}}}\xspace}
\newcommand{\pre}{\mathsf{pre}}
\newcommand{\con}{\mathsf{con}}
\newcommand{\Sig}{\ensuremath{\mathit{Sig}}\xspace}
\newcommand{\sig}{\ensuremath{\mathrm{sig}}\xspace}
\newcommand{\Mod}{\ensuremath{\mathrm{Mod}}\xspace}
\newcommand{\Modd}{\ensuremath{\mathrm{Mod_{DI}}}\xspace}
\newcommand{\rank}{\mathit{rank(\cdot)}}
\newcommand{\tbs}{{\small\ensuremath{\top\bot^*}}\xspace}
\newcommand{\modex}{\textsf{\small Mod}\xspace}
\newcommand{\opt}{\textsf{\small Opt}\xspace}
  \newcommand{\D}{\ensuremath{\mathcal{D}}\xspace}
  \newcommand{\E}{\ensuremath{\mathcal{E}}\xspace}
  \newcommand{\M}{\ensuremath{\mathcal{M}}\xspace}
  \newcommand{\N}{\ensuremath{\mathrm{N}}\xspace}
  \renewcommand{\S}{\ensuremath{\mathcal{S}}\xspace}
  \newcommand{\T}{\ensuremath{\mathcal{T}}\xspace}
  \newcommand{\DL}{\ensuremath{\mathcal{DL}}\xspace}
  \newcommand{\DLN}{\ensuremath{\mathcal{DL}^\N}\xspace}
  \newcommand{\ELpp}{\ensuremath{\mathcal{EL}^{++}}\xspace}
  \newcommand{\KB}{\ensuremath{\mathcal{KB}}\xspace}
  \newcommand{\KBall}{\ensuremath{\mathcal{KB}_\mathit{all}}\xspace}
  \newcommand{\DLL}{\textit{DL-lite}\xspace}
  \newcommand{\tmS}{\ensuremath{\mathcal{S}}\xspace}
 \newcommand{\jo}{\mathtt{J}}
	\newcommand{\gl}{\mathtt{G}}
	\newcommand{\pn}{\mathtt{P}}
	\newcommand{\is}{\mathtt{M}}
\qed \end{trivlist}}
\newenvironment{theorem*}[2]%
{\begin{trivlist} \item[] {\bf #1~\protect{\ref{#2}}}\it}{\end{trivlist}}
\begin{document}

\title{Optimizing the computation of overriding}

\author{P.A.~Bonatti \and I.M.~Petrova \and L.~Sauro}

\institute{Dip.\ Ing.\ Elet.\ e Tecnologie dell'Informazione, Universit\`a di Napoli Federico II}

\maketitle

  \begin{abstract}
We introduce optimization techniques for reasoning in \DLN---a recently introduced family of nonmonotonic description logics whose characterizing features appear well-suited to model the applicative examples naturally arising in biomedical domains and semantic web access control policies. Such optimizations are validated experimentally on large KBs with more than 30K axioms. Speedups exceed 1 order of magnitude.
For the first time, response times compatible with real-time reasoning are obtained with nonmonotonic KBs of this size.
  \end{abstract}

  \section{Introduction}

Recently, a new family of nonmonotonic Description Logics (DLs),
called \DLN, has been introduced \cite{DLN-15}. It supports
\emph{normality concepts} $\N C$ to denote the normal/standard/\break
prototypical instances of a concept $C$, and prioritized
\emph{defeasible inclusions} (DIs)\break $C\sqsubseteq_n D$ with the following 
meaning: ``\emph{by default, the instances of $C$ satisfy
  $D$, unless stated otherwise}'', that is, unless some higher
priority axioms entail $C \sqcap \neg D$; in that case,
$C\sqsubseteq_n D$ is \emph{overridden}. The normal/standard/prototypical
instances of $C$ are required to satisfy all the DIs that are not
overridden in $C$.

Given the negligible number of applications based on nonmonotonic
logics deployed so far, \DLN has been designed to address real-world
problems and concrete knowledge engineering needs. In this regard, the
literature provides  clear and
articulated discussions of how nonmonotonic reasoning can be of help
in important contexts related to the semantic web, such as biomedical
ontologies
\cite{DBLP:conf/psb/Rector04,DBLP:journals/ijmms/StevensAWSDHR07}
(with several applications, such as literature search) and (semantic
web) policy formulation \cite{DBLP:journals/csec/WooL93}. 
These and other applications are extensively discussed in \cite{DLN-15}.

The distinguishing features in \DLN's design are: ($i$) \DLN adopts
the simplest possible criterion for overriding, that is, inconsistency
with higher priority axioms; ($ii$) all the normal instances of a
concept $C$ conform to the same set of default properties, sometimes
called \emph{prototype} in the following; ($iii$) the conflicts
between DIs that cannot be resolved with priorities are regarded as
knowledge representation errors and are to be fixed by the knowledge
engineer (typically, by adding specific DIs).  No traditional
nonmonotonic logic satisfies ($i$), and very few satisfy ($ii$) or
($iii$). \DLN behaves very well on applicative examples due to the
following consequences of ($i$)--($iii$) (a comparison with other
nonmonotonic DLs with respect to these features is summarized in
Table~\ref{summary-of-comparisons}):

\emph{No inheritance blocking}: In several nonmonotonic logics a concept with exceptional properties inherits \emph{none} of the default properties of its superclasses. This undesirable phenomenon is known as \emph{inheritance blocking}.

\emph{No undesired closed-world assumption (CWA) effects}: In some nonmonotonic DLs, an exceptional concept is shrinked to the individuals that explicitly belong to it, if any; hence, it may become inconsistent.

\emph{Control on role ranges}:  Unlike most nonmonotonic DLs, \DLN axioms can specify whether a role should range only over normal individuals or not.

\emph{Detect inconsistent prototypes}: \DLN facilitates the
identification of all conflicts that cannot be resolved with
priorities (via consistency checks over normality concepts), because
their correct resolution is application dependent and should require
human intervention (cf.\ \cite[Sec.~1]{DLN-15} and
Example~\ref{ex:policy-conflict} below).

\emph{Tractability}: \DLN is currently the only nonmonotonic DL known to preserve the tractability of all low-complexity DLs, including \ELpp and \DLL (that underly the OWL2-EL and OWL2-QL profiles). This opens the way to processing very large nonmonotonic KBs within these fragments.

\begin{table}
  \label{summary-of-comparisons}
  \footnotesize
  \vspace*{-1em}
  \begin{center}
    \begin{tabular}{|m{15em}||>{\centering\arraybackslash}m{2.1em}|>{\centering\arraybackslash}m{2.1em}|>{\centering\arraybackslash}m{2.1em}|>{\centering\arraybackslash}m{4.2em}|>{\centering\arraybackslash}m{1.8em}|>{\centering\arraybackslash}m{1.8em}|>{\centering\arraybackslash}m{1.8em}|>{\centering\arraybackslash}m{2.2em}|}
      \hline
      \multicolumn{1}{|p{8.5em}||}{} &
      \multicolumn{1}{c|}{CIRC} &
      \multicolumn{1}{c|}{DEF} &
      \multicolumn{1}{c|}{AEL} &
      \multicolumn{1}{c|}{TYP} &
      \multicolumn{2}{c|}{RAT} &
      \multicolumn{1}{c|}{PR} &
      \\
      \textbf{Features}
      & \footnotesize 
        \cite{DBLP:journals/jair/BonattiLW09,DBLP:journals/jair/BonattiFS11}
      & \footnotesize 
        \cite{DBLP:journals/jar/BaaderH95,DBLP:journals/jar/BaaderH95a}
      & \footnotesize 
        \cite{DBLP:journals/tocl/DoniniNR02}
      & \footnotesize 
        \cite{DBLP:journals/fuin/GiordanoOGP09,DBLP:journals/ai/GiordanoGOP13}
      & 
        \multicolumn{1}{c}{
        \cite{DBLP:conf/jelia/CasiniS10,DBLP:conf/dlog/CasiniMMV13}
        }
      & \footnotesize
        \cite{DBLP:journals/jair/CasiniS13}
      & \footnotesize 
        \cite{DBLP:journals/ai/Lukasiewicz08} 
      & \footnotesize 
        \DLN 
      \\
      \hline
      \hline
      no inheritance blocking
      & \checkmark 
      & \checkmark 
      & \checkmark 
      &  
      &  
      & \checkmark 
      & \checkmark 
      & \checkmark 
      \\
      \hline
      no CWA effects
      &  
      & \checkmark 
      & \checkmark 
      &  
      & \checkmark 
      & \checkmark 
      &  
      & \checkmark 
      \\
      \hline
      fine-grained control on role ranges
      &  
      &  
      &  
      &  sometimes 
      &  
      &  
      &  
      & \checkmark 
      \\
      \hline
      detects inconsistent prototypes
      &  
      &  
      &  
      & sometimes 
      &  
      &  
      & \checkmark 
      & \checkmark 
      \\
\hide{
      \hline
      unique deductive~~~\break closure
      & \checkmark 
      &  
      &  
      & \checkmark 
      &  
      &  
      &  
      & \checkmark 
      \\
}
\hide{      \hline
      preserves legacy~~~\break taxonomies
      &  
      &  
      &  
      &  
      &  
      &  
      &  
      & \checkmark 
      \\
}
      \hline
      preserves tractability
      &  
      &  
      &  
      &  
      &  
      &  
      &  
      & ~\checkmark$^{(*)^{~}}$   
      \\
\hide{
      \hline
      \hline
      implicit specificity
      &  
      &  
      &  
      & \checkmark 
      & \checkmark 
      & \checkmark 
      & \checkmark 
      &  
      \\
      \hline
      other priorities
      & \checkmark 
      & \checkmark 
      &  
      &  
      &  
      &  
      &  
      & \checkmark 
      \\
}
      \hline
      \multicolumn{9}{p{36em}}{\footnotesize (*)\,  
        It holds for subsumption, assertion checking, concept consistency, KB consistency.}\\
    \end{tabular}
  \end{center}
  \vspace*{-0.5em}
\caption{Partial comparison with other nonmonotonic DLs, cf.\cite{DLN-15}, where CIRC, DEF, AEL, TYP, RAT, PR stand, respectively, for Circumscribed DLs, Default DLs, Autoepistemic DLs, DLs with Typicality, DLs with Rational Closure, and Probabilistic DLs.}
  \vspace*{-2em}
\end{table}

The performance of \DLN inference has been experimentally analyzed on
large KBs (with more than 20K concept names and over 30K
inclusions). The results are promising; still, as defeasible
inclusions approach 25\% of the KB, query response time slows down enough to call for improvements.
In this paper, we study two optimization techniques to improve \DLN query response time:
\begin{enumerate}
\item Many of the axioms in a large KB are expected to be irrelevant to the given query.  We investigate the use of \emph{module extractors} \cite{DBLP:conf/dlog/SattlerSZ09,MaWa-DL14} to focus reasoning on relevant axioms only. The approach is not trivial (module extractors are unsound for most nonmonotonic logics, including circumscription, default and autoepistemic logics) and requires an articulated correctness proof.

\item We introduce a new algorithm for query answering, that is expected to exploit incremental reasoners at their best. Incremental reasoning is crucial as \DLN's reasoning method iterates consistency tests on a set of KBs with large intersections. While the assertion of new axioms is processed very efficiently, the  computational cost of axiom deletion is generally not negligible. We introduce an \emph{optimistic reasoning method} that is expected to reduce the number of deletions.
\end{enumerate}

\noindent
Both optimizations are validated experimentally.  Speedups  exceed 1 order of magnitude.
To the best of our knowledge, this is the first time that response times compatible with real-time reasoning are obtained with nonmonotonic KBs of this size.

The paper is organized as follows: Sec.~\ref{sec:prelim} provides the
basics of \DLN and illustrates its inferences with examples.  Sections~\ref{sec:module-extractor} and
\ref{sec:optimistic-method} introduce the two optimization methods, respectively, and
prove their correctness. Their experimental assessment is in
Sec.~\ref{sec:experiments}.
\ARTONLY{Proofs have been omitted due to space limitations. They can be found in \cite{??}, together with further explanations and examples.}
We assume the reader to be familiar with description logics, see
\cite{BaaCa10} for all details.
The code and test suites are available at: \url{http://goo.gl/KnMO9l}.

  \section{Preliminaries}
  \label{sec:prelim}

Let \DL be any classical description logic language (see
\cite{BaaCa10} for definitions), and let \DLN be the extension of
\DL with a new concept name $\N C$ for each \DL concept $C$. The new
concepts are called \emph{normality concepts}.

A \DLN \emph{knowledge base} is a disjoint union $\KB=\tmS \cup \D$
where \tmS is a finite set of \DLN inclusions and assertions (called
\emph{strong} or classical axioms) and \D is a finite set of
\emph{defeasible inclusions} (DIs, for short) that are expressions
$C\sqsubseteq_n D$ where $C$ is a \DL concept and $D$ a \DLN
concept.
If $\delta=(C\sqsubseteq_n D)$, then $\pre(\delta)$ and $\con(\delta)$ denote $C$ and $D$, respectively.
Informally speaking, the set of DIs satisfied by all the instances of a normality concept $\N C$ constitute the \emph{prototype} associated to $C$.

DIs are prioritized by a strict partial order $\prec$. If
$\delta_1\prec\delta_2$, then $\delta_1$ has higher priority than
$\delta_2$. \DLN solves automatically only the conflicts that can be
settled using $\prec$; any other conflict shall be resolved by the
knowledge engineer (typically by adding suitable DIs).
Two priority relations have been investigated so far. Both
are based on \emph{specificity}: the specific default properties of a
concept $C$ have higher priority than the more generic properties of
its superconcepts (i.e.\ those that subsume $C$). The priority
relation used in most of \cite{DLN-15}'s examples identifies those
superconcepts with strong axioms only:
\begin{equation}
  \label{specificity}
  \delta_1\prec\delta_2 \mbox{ iff }
  \pre(\delta_1)\sqsubseteq_\tmS \pre(\delta_2) \mbox{ and }
  \pre(\delta_2)\not\sqsubseteq_\tmS \pre(\delta_1) \,.\footnote{As usual, $C\sqsubseteq_\tmS D$ means that $\tmS \models C\sqsubseteq D$.}
\end{equation}
The second priority relation investigated in \cite{DLN-15} is
\begin{equation}
  \label{specificity-2}
  \delta_1 \prec \delta_2 \mbox{ iff } \mathit{rank(\delta_1) >
    rank(\delta_2)},
\end{equation}
where $\mathit{rank(\cdot)}$ is shown in Algorithm \ref{alg:rank} and corresponds to 
the ranking function of rational closure \cite{DBLP:conf/jelia/CasiniS10,DBLP:journals/jair/CasiniS13}.
This relation uses also DIs to determine superconcepts, so (roughly
speaking) a DI $C \sqsubseteq_n D$---besides defining a default
property for $C$---gives the specific default properties of $C$ higher
priority than those of $D$. The advantage of this priority relation is
that it resolves more conflicts than (\ref{specificity}); the main
advantage of (\ref{specificity}) is predictability; e.g.\ the effects
of adding default properties to an existing, classical KB are more
predictable, as the hierarchy used for determining specificity and
resolving conflicts is the original, validated one, and is not
affected by the new DIs (see also the related discussion in
\cite{DBLP:conf/semweb/BonattiFS10,DBLP:conf/aaai/BonattiFS11}, that
adopt (\ref{specificity})).

\begin{algorithm}[h]
  \label{alg:rank}
  \caption{Ranking function}
  \small
  \dontprintsemicolon
  \KwIn{Ontology $\KB=\S\cup \D$}
  \KwOut{the function $\rank$}
  \BlankLine
      $i:=-1$;\quad
      $\E_0 := \{C\sqsubseteq D \mid C\sqsubseteq_n D\in\D\}$ \;
      \Repeat{$\E_{i+1}=\E_i$}{ 
    	   $i:=i+1$\;
        $\E_{i+1}:= \{C\sqsubseteq D \in \E_i\mid \S\cup \E_{i}\models C\sqsubseteq \perp\}$\;
	      \ForAll{$C\sqsubseteq_n D\mbox{ s.t. }C\sqsubseteq D\in \E_i\setminus \E_{i+1}$}{
            assign $\mathit{rank}(C\sqsubseteq_n D):=i$\;
	       }
      }
      \textbf{forall} $C\sqsubseteq_n D\in\E_{i+1}$ \textbf{do} assign $\mathit{rank}(C\sqsubseteq_n D):=\infty$ \;
      \Return $\rank$\; 
\end{algorithm}

The expression $\KB \nent \alpha$ means that $\alpha$ is a \DLN
\emph{consequence} of \KB. Due to space limitations, we do not report
the model-theoretic definition of \nent and present only its reduction to classical reasoning \cite{DLN-15}. For all subsumptions and assertions $\alpha$, $\KB \nent \alpha$
holds iff $\KB^\Sigma \models \alpha$, where\hide{$\models$ is classical
entailment,} $\Sigma$ is the set of normality concepts that explicitly occur in
$\KB\cup\{\alpha\}$, and $\KB^\Sigma$ is a classical knowledge base
obtained as follows (recall that $\KB=\tmS\cup\D$):

First, for all DIs $\delta\in\D$ and all $\N C\in\Sigma$, let:
\begin{equation}
  \label{DI-trans}
  \delta^{\N C} = \big(\N C \sqcap \pre(\delta) \sqsubseteq \con(\delta)\big) \,.
\end{equation}
The informal meaning of $\delta^{\N C}$ is: ``$\N C$'s instances satisfy $\delta$''.

Second, let $\tmS'\downarrow_{\prec \delta}$ denote the result of removing from the axiom set $\tmS'$ all the $\delta_0^{\N C}$ such that $\delta_0 \not\prec \delta$:
\[
\tmS'\downarrow_{\prec \delta} = \tmS' \setminus
     \{ \delta_0^{\N C} \mid \N C\in \Sigma \land
        \delta_0 \not\prec \delta \} \,.
\]

Third, let $\delta_1,\ldots,\delta_{|\D|}$ be any \emph{linearization} of
$(\D,\prec)$.\footnote{That is, $\{\delta_1,\ldots,\delta_{|\D|}\}=\D$ and for all
$i,j=1,\ldots,{|\D|}$, if $\delta_i \prec \delta_j$ then $i<j$. 
}

Finally, let $\KB^\Sigma = \KB^\Sigma_{|\D|}$, where the sequence $\KB^\Sigma_i$ ($i=1,2,\ldots,{|\D|}$) is inductively defined as follows:
\begin{eqnarray}
  \label{KB-Sigma-constr-1}
  \KB^\Sigma_0 &=& \tmS \cup \big\{\N C\sqsubseteq C \mid \N C\in\Sigma\big\} 
  \\
  \nonumber
  \KB^\Sigma_i &=& \KB^\Sigma_{i-1} \cup \big\{\delta_i^{\N C} \mid \delta_i\in\KB, 
       \N C\in\Sigma,
       \mbox{ and } 
       \\
       \label{KB-Sigma-constr-2}
       & & \KB^\Sigma_{i-1}\downarrow_{\prec \delta_i} \cup\ \{\delta_i^{\N C}\} \not\models
       \N C\sqsubseteq \bot \big\}\,.
\end{eqnarray}
In other words, the above sequence starts with \KB's strong axioms
extended with the inclusions $\N C\sqsubseteq C$, then processes the
DIs $\delta_i$ in non-increasing priority order. If $\delta_i$ can be
consistently added to $C$'s prototype, given all higher priority DIs
selected so far (which is verified by checking that $\N
C\not\sqsubseteq \bot$ in line (\ref{KB-Sigma-constr-2})), then its
translation $\delta_i^{\N C}$ is included in $\KB^\Sigma$
(i.e.\ $\delta_i$ enters $C$'s prototype), otherwise $\delta_i$ is
discarded, and we say that $\delta_i$ is \emph{overridden in $\N C$}.

     \subsection{Examples}
     \label{sec:examples}

We start with a brief discussion of \DLN's conflict handling. Most
other logics silently neutralize the conflicts between nonmonotonic
axioms with the same (or incomparable) priorities by computing the
inferences that are invariant across all possible ways of resolving
the conflict.
A knowledge engineer
might solve it in favor of \emph{some} of its possible resolutions, instead;
however, if the logic silently neutralizes the conflict, then missing knowledge may remain undetected and unfixed. 
This approach may cause
serious problems in the policy domain:

\begin{example}
  \label{ex:policy-conflict}
  Suppose that project coordinators are both administrative staff and
  research staff. By default, administrative staff are allowed to sign
  payments, while research staff are not. A conflict arises since both
  of these default policies apply to project coordinators. Formally,
  \KB can be formalized with: \\ {\small
  \begin{minipage}{.48\textwidth}
    \begin{eqnarray}
      \label{pol-conf-1}
      & \mathtt{Admin \sqsubseteq_n \exists has\_right.Sign}\\
      \label{pol-conf-2}
      & \mathtt{Research \sqsubseteq_n \neg \exists has\_right.Sign}
    \end{eqnarray}~
  \end{minipage}
  ~
  \begin{minipage}{.48\textwidth}
    \begin{eqnarray}
      \label{pol-conf-3}
      & \mathtt{PrjCrd \sqsubseteq Admin \sqcap Research }
    \end{eqnarray}~
  \end{minipage}
  }

  \noindent
  Leaving the conflict unresolved may cause a variety of security
  problems. If project coordinators should \emph{not} sign payments,
  and the default policy is \emph{open} (authorizations are granted by
  default), 
  then failing to infer $\mathtt{\neg \exists
    has\_right.Sign}$ would\hide{may} improperly authorize the signing
  operation. Conversely, if the authorization is to be granted, then
  failing to prove $\mathtt{ \exists has\_right.Sign}$ causes a
  \emph{denial of service} (the user is unable to complete a legal
  operation). To prevent these problems, \DLN makes the conflict
  visible by inferring $\KB\nent \N\,\mathtt{PrjCrd} \sqsubseteq
  \bot$ (showing that $\mathtt{PrjCrd}$'s prototype is
  inconsistent).
  This can be proved by checking that
  $\KB^\Sigma \models \N\,\mathtt{PrjCrd} \sqsubseteq
  \bot$, where $\Sigma=\{\N\,\mathtt{PrjCrd}\}$. Then $\KB^\Sigma$ consists of  (\ref{pol-conf-3}), $\N\,\mathtt{PrjCrd} \sqsubseteq \mathtt{PrjCrd}$, and the translation of (\ref{pol-conf-1}) \emph{and} (\ref{pol-conf-2}) (none overrides the other because none is more specific under any of the two priorities):{\small
    $$
    \begin{array}{l}
      \N\,\mathtt{PrjCrd} \sqcap \mathtt{Admin \sqsubseteq \exists has\_right.Sign},\\  \N\,\mathtt{PrjCrd} \sqcap \mathtt{Research \sqsubseteq \neg \exists has\_right.Sign}. \vspace*{-3em} 
    \end{array}
    $$}
\qed
\end{example}

\noindent
Here is another applicative example from the semantic policy domain,
showing \DLN's behavior on multiple exception levels.

\begin{example}
  \label{policy-0}
  We are going to axiomatize the following natural language policy:
  \emph{``In general, users cannot access confidential files; Staff
    can read confidential files; Blacklisted users are not granted
    any access. This directive cannot be overridden.''}  Note that
  each of the above directives contradicts (and is supposed to
  override) its predecessor in some particular case.  Authorizations
  can be reified as objects with attributes \emph{subject} (the access
  requestor), \emph{target} (the file to be accessed), and
  \emph{privilege} (such as \emph{read} and \emph{write}).  Then the
  above policy can be encoded as follows:
  {\small
  \begin{eqnarray}
    \label{pol-0-0}
    \mathtt{Staff} & \sqsubseteq & \mathtt{User}
    \\
    \label{pol-0-1}
    \mathtt{Blklst} & \sqsubseteq & \mathtt{Staff}
    \\
    \label{pol-0-2}
    \mathtt{UserReqst}
    & \sqsubseteq_n & 
    \mathtt{\neg\exists privilege}
    \\
    \label{pol-0-3}
    \mathtt{StaffReqst}
    & \sqsubseteq_n & 
    \mathtt{~~\,\exists privilege.Read}
    ~~~~~~~~~
    \\
    \label{pol-0-4}
    \mathtt{BlkReq}
    & \sqsubseteq & 
    \mathtt{\neg\exists privilege}
  \end{eqnarray}%
  }%
  where $\mathtt{BlkReq} \doteq \mathtt{\exists subj.Blklst}$, $\mathtt{StaffReqst} \doteq
  \mathtt{\exists subj.Staff}$, and
  $\mathtt{UserReqst} \doteq \mathtt{\exists subj.User}$.
  By (\ref{pol-0-0}), both the specifity relations (\ref{specificity}) and (\ref{specificity-2}) yield
  $(\ref{pol-0-3})\prec(\ref{pol-0-2})$, that is, (\ref{pol-0-3}) has
  higher priority than (\ref{pol-0-2}).  Let $\Sigma=\{\N \mathtt{StaffReqst}\}$; (\ref{pol-0-3}) overrides (\ref{pol-0-2}) in $\N \mathtt{StaffReqst}$ (under (\ref{specificity}) as well as (\ref{specificity-2})), so
  $\KB^\Sigma$ consists of: (\ref{pol-0-0}),  (\ref{pol-0-1}),  (\ref{pol-0-4}), plus
{\small
  \begin{eqnarray*}
    \mathtt{\N \mathtt{StaffReqst} }
    & \sqsubseteq &  \mathtt{StaffReqst}
    \\
    \mathtt{\N \mathtt{StaffReqst} \sqcap \mathtt{StaffReqst} }
    & \sqsubseteq & 
    \mathtt{~~\,\exists privilege.Read}
    ~~~~~~~~~
  \end{eqnarray*}
}
  Consequently, $\KB\nent \mathtt{\N\mathtt{StaffReqst}  \sqsubseteq
      \exists privilege.Read}$. Similarly, it can be verified that:
  \begin{enumerate}
  \item Normally, access requests involving confidential files are
    rejected, if they come from generic users: $\KB\nent \mathtt{\N\mathtt{UserReqst}  \sqsubseteq \neg\exists privilege}$;
  \item Blacklisted users cannot do anything by (\ref{pol-0-4}), so, in
    particular:\\ $\KB\nent \mathtt{\N\mathtt{BlkReq} \sqsubseteq
      \neg\exists privilege}$.\qed
  \end{enumerate}
\end{example}

\noindent
Some applicative examples from the biomedical domain can be found in
\cite{DLN-15} (see Examples~3, 4, 10, 12, and the drug contraindication
example in Appendix C). Like the above examples, they are all correctly
solved by \DLN with both priority notions. Applicative examples hardly exhibit the complicated networks of dependencies between conflicting defaults that occur in artificial examples. Nonetheless, we briefly discuss the artificial examples, too, as a mean of comparing \DLN with other logics such as \cite{DBLP:journals/ai/Sandewall10,DBLP:journals/jair/CasiniS13,DBLP:journals/jair/BonattiFS11}. 

In several cases, e.g.\ examples B.4 and B.5 in \cite{DBLP:journals/ai/Sandewall10}, \DLN agrees with \cite{DBLP:journals/ai/Sandewall10,DBLP:journals/jair/CasiniS13,DBLP:journals/jair/BonattiFS11} under both priority relations. 
Due to space limitations, we illustrate only B.4.
 
\begin{example}[Juvenile offender]
Let $\KB$ consist of axioms (\ref{axiom:KBfirst})--(\ref{axiom:j2}) where $\jo$, $\gl$, $\is$, $\pn$ abbreviate $\mathtt{JuvenileOffender}$, $\mathtt{GuiltyOfCrime}$, $\mathtt{IsMinor}$ and $\mathtt{ToBePunished}$, respectively.    

\noindent
{\small
\begin{minipage}{0.5\textwidth}
\begin{eqnarray}
\label{axiom:KBfirst} \jo &\sqsubseteq & \gl 
\\
\jo &\sqsubseteq & \is	
\\
\label{axiom:j0}  \is\sqcap \gl &\sqsubseteq_n & \neg 	\pn \
\\
\label{axiom:j1} \is &\sqsubseteq_n & \neg \pn 
\\
\label{axiom:j2} \gl &\sqsubseteq_n & \pn 
\end{eqnarray}
\end{minipage}
\begin{minipage}{0.5\textwidth}
\begin{eqnarray}
\label{axiom:KBSf} \jo &\sqsubseteq & \gl\\
\jo &\sqsubseteq & \is\\
\N\jo &\sqsubseteq  & \jo\\
\label{axiom:j3} \N\jo\sqcap \is\sqcap \gl &\sqsubseteq & \neg \pn\\
\label{axiom:KBSl} \N\jo\sqcap \is &\sqsubseteq  & \neg \pn
\end{eqnarray}
\end{minipage}
}

\vspace{10pt}
\noindent
On one hand, criminals have to be punished and, on the other hand,
minors cannot be punished.  So, what about juvenile offenders?
The defeasible inclusion (\ref{axiom:j0}) breaks the tie in favor of
their being underage, hence not punishable.  By setting
$\Sigma=\{\N\jo\}$, priorities (\ref{specificity}) and (\ref{specificity-2}) both
return axioms (\ref{axiom:KBSf})--(\ref{axiom:KBSl}) as
$\KB^{\Sigma}$.  Then, clearly, $\KB^{\Sigma}\models
\N\jo\sqsubseteq \neg \pn$ which is \DLN's analogue of the
inferences of
\cite{DBLP:journals/ai/Sandewall10,DBLP:journals/jair/CasiniS13,DBLP:journals/jair/BonattiFS11}.
\end{example}

In other cases (e.g.\ example B.1 in  \cite{DBLP:journals/ai/Sandewall10}) \DLN finds the same conflicts as \cite{DBLP:journals/ai/Sandewall10,DBLP:journals/jair/CasiniS13,DBLP:journals/jair/BonattiFS11}. 
However, \DLN's semantics signals these conflicts 
to the knowledge engineer whereas in \cite{DBLP:journals/ai/Sandewall10,DBLP:journals/jair/CasiniS13,DBLP:journals/jair/BonattiFS11}
they are silently neutralized.
\begin{example}[Double Diamond]
 Let $\KB$ be the following set of axioms:

\noindent
{\small
\begin{minipage}{0.5\textwidth}
\begin{eqnarray}
\label{axiom:B1first} \mathtt{A}&\sqsubseteq_n & \mathtt{T} 
\\
\mathtt{A} &\sqsubseteq_n & \mathtt{P}	
\\
\label{ex-DD-1}
 \mathtt{T} &\sqsubseteq_n & \mathtt{S}
\\
\label{ex-DD-2}
\mathtt{P}	 &\sqsubseteq_n & \neg\mathtt{S}	 
\end{eqnarray}
\end{minipage}
\begin{minipage}{0.5\textwidth}
\begin{eqnarray}
\label{ex-DD-3}
 \mathtt{S} &\sqsubseteq_n & \mathtt{R}\\
\mathtt{P} &\sqsubseteq_n & \mathtt{Q}\\
\label{axiom:B1last} \mathtt{Q} &\sqsubseteq_n  & \neg\mathtt{R}
\end{eqnarray}
\end{minipage}
}

\vspace{10pt}\noindent
DIs (\ref{ex-DD-1}) and (\ref{ex-DD-2}) have incomparable priority
under (\ref{specificity}) and (\ref{specificity-2}).
Consequently, it is easy to see that
$\N\mathtt{A}\sqsubseteq \mathtt{S}$ and
$\N\mathtt{A}\sqsubseteq \neg \mathtt{S}$ are both implied by $\KB^\Sigma$ and hence the knowledge engineer is warned that $\N\mathtt{A}$ is inconsistent.
The same conflict is silently neutralized in \cite{DBLP:journals/jair/CasiniS13,DBLP:journals/ai/Sandewall10,DBLP:journals/jair/BonattiFS11} 
($\mathtt{A}$'s instances are subsumed by neither  $\mathtt{S}$ nor  
$\neg \mathtt{S}$ and no inconsistency arises). Similarly for the incomparable DIs (\ref{ex-DD-3}) and (\ref{axiom:B1last}) and the related conflict.
\end{example}


The third category of examples (e.g.\  B.2 and B.3 in \cite{DBLP:journals/ai/Sandewall10}) presents a more variegated behavior. 
In particular, \cite{DBLP:journals/jair/CasiniS13} and \DLN with priority (\ref{specificity-2}) solve all conflicts and infer the same consequences;  \cite{DBLP:journals/ai/Sandewall10} solves only some conflicts; \cite{DBLP:journals/jair/BonattiFS11}  
 is not able to solve any conflict and yet it does not raise any inconsistency warning;  \DLN with priority (\ref{specificity}) cannot solve the conflicts but  raises an inconsistency warning.
Here, for the sake of simplicity,  we discuss in detail a shorter example which has all relevant ingredients. 

\begin{example}
\label{example:allDIs}
Let $\KB$ be the following defeasible knowledge base:

\noindent
{\small
\begin{minipage}{0.3\textwidth}
\begin{eqnarray}
\label{axiom:first} \mathtt{A}&\sqsubseteq_n & \mathtt{B}
\end{eqnarray}
\end{minipage}
\begin{minipage}{0.3\textwidth}
\begin{eqnarray}
\label{axiom:middle} \mathtt{A}&\sqsubseteq_n & \mathtt{C}
\end{eqnarray}
\end{minipage}
\begin{minipage}{0.3\textwidth}
\begin{eqnarray}
\label{axiom:last} \mathtt{B} &\sqsubseteq_n & \neg\mathtt{C}
 \end{eqnarray}
\end{minipage}
}

\vspace{10pt}\noindent
According to priority (\ref{specificity}) all DIs are incomparable. Therefore, 
\DLN warns 
(by inferring $\N \mathtt{A} \sqsubseteq \bot$) that  the conflict  between 
$\N\mathtt{A}\sqsubseteq \mathtt{C}$ and $\N\mathtt{A}\sqsubseteq \neg\mathtt{C}$ cannot be solved. Note that \cite{DBLP:journals/jair/BonattiFS11} adopts priority (\ref{specificity}), too, however according to circumscription, any  interpretation where $\mathtt{A}$'s instances are either in  $\mathtt{\neg C}\sqcap\mathtt{B}$ or in $\mathtt{C}$ is a model, so $\mathtt{A}$ is satisfiable (the conflict is silently neutralized).
Under priority (\ref{specificity-2}), instead, axiom (\ref{axiom:first}) gives (\ref{axiom:first}) and (\ref{axiom:middle}) higher priority than (\ref{axiom:last}). Consequently,  $\N\mathtt{A}\sqsubseteq \mathtt{C}$ prevails over
$\N\mathtt{A}\sqsubseteq \neg\mathtt{C}$. In this case, \DLN and rational closure infer the same consequences.
\end{example}

%
%

  \section{Relevance and modularity}
  \label{sec:module-extractor}

The naive construction of $\KB^\Sigma$ must process all the axioms in
$\KBall^\Sigma = \KB^\Sigma_0 \cup \{\delta^{\N C} \mid \delta \in
\D,\ \N C \in \Sigma\}$.
Here we optimize \DLN inference by quickly discarding some of the
irrelevant axioms in $\KBall^\Sigma$ using modularization techniques.

Roughly speaking, the problem of module extraction can be expressed as follows: given
a reference vocabulary $\Sig$, a module is a (possibly minimal) subset 
$\M\subseteq \KB$ that is relevant for $\Sig$ in the sense that it preserves
the consequences of $\KB$ that contain only terms in $\Sig$.


The interest in module extraction techniques is motivated by several
ontology engineering needs.
We are interested in modularization as an optimization technique for
querying large ontologies: the query is evaluated on a (hopefully much smaller) module of the ontology that preserves the query result (as well as any inference whose signature is contained in the query's signature).


However, the problem of deciding whether two knowledge bases 
entail the same axioms over a given signature is usually harder than standard reasoning tasks.  
Consequently deciding whether $\KB'$ is a module of $\KB$ (for $\Sig$) is computationally expensive in general.
For example, $DL\mbox{--}Lite_{horn}$ complexity grows from PTIME to coNP-TIME-complete \cite{DBLP:conf/kr/KontchakovWZ08}; 
for $\mathcal{ALC}$, complexity is one exponential harder \cite{GhilardiLutzWolter-KR06}, while for $\mathcal{ALCQIO}$ the problem becomes even undecidable \cite{DBLP:conf/ijcai/LutzWW07}.

In order to achieve a practical solution, a syntactic approximation has been adopted in \cite{DBLP:conf/dlog/SattlerSZ09,DBLP:journals/jair/GrauHKS08}. 
The corrisponding algorithm \tbs-$\Mod(\Sig,\KB)$ is defined
in \cite[Def.~4]{DBLP:conf/dlog/SattlerSZ09} and reported in
 Algorithm~\ref{alg:tbs-mod} below. 
It is based on the property of $\bot$-locality and $\top$-locality of single axioms (line \ref{alg:cond}).
An axiom is local w.r.t. $\Sig$ 
if the substitution of all non-$\Sig$ terms with $\bot$ (resp. $\top$)
turns it into a tautology. 

The module extractor identifies a subset $\M\subseteq\KB$ of the
knowledge base and a signature $\Sig$ (containing all symbols of
interest) such that all axioms in $\KB\setminus\M$ are local
w.r.t.\ \Sig. This guarantees that every model of \M can be extended
to a model of \KB by setting each non-$\Sig$ term to either $\bot$ or
$\top$. In turn, this property guarantees that any query whose
signature is contained in \Sig has the same answer in \M and \KB.

The function $x\mbox{-}\Mod(\Sig,\KB)$ (lines 9-19), where $x$ stands for $\top$ or $\bot$, 
describes the procedure for 
constructing modules of a knowledge base $\KB$ for each notion of locality. 
%
Starting with an empty set of axioms  (line 11), iteratively,  
the axioms $\alpha$  that are non-local are added to the module (line \ref{alg:than1}) and,
in order to preserve soundness, 
the signature against which locality is checked 
is extended with the terms in $\alpha$  (line \ref{alg:cond}).  Iteration stops when a fixpoint is reached.

Modules based on a single syntactic locality can be further shrinked
by iteratively nesting $\top$-extraction into $\bot$-extraction, thus obtaining \tbs-$\Mod(\Sig,\KB)$ modules (lines \ref{tb-start}-\ref{tb-end}).

The notions of module and
locality must be extended to handle DIs, before we can apply them to
\DLN. 
Definition \ref{def:mod-loc}
generalizes the substitutions operated by the module extraction
algorithm, abstracting away procedural details. 
As in \cite{DBLP:conf/dlog/SattlerSZ09}, both
$\widetilde X$ and $\sig(X)$ denote the signature of $X$.

\begin{algorithm}[t]
  \label{alg:tbs-mod}
  \caption{\tbs-$\Mod(\Sig,\KB)$}
  \small
  \dontprintsemicolon
  \KwIn{Ontology $\KB$,\quad signature \Sig}
  \KwOut{\tbs-module \M of \KB w.r.t.\ \Sig}
  \BlankLine
  \tcp{main}
  \Begin{ \label{tb-start}
         $\M:=\KB$\;
         \Repeat{$\M\ne \M'$}{ 
		 $\M':= \M$\;
           $\M := \mbox{$\top$-Mod}(\mbox{$\bot$-Mod}(\M,\Sig),\Sig)$\; 
        }
        \Return{\M}
    } \label{tb-end}
  \BlankLine
  \textbf{function} $x$-Mod(\KB,\Sig) 
  \tcp{\hspace{1em}$x\in\{\bot,\top\}$} \;
  \Begin{
      $\M := \emptyset$,\quad $\T := \KB$ \;
      \Repeat{\emph{changed = \texttt{false}}}{ 
        changed = \texttt{false} \;
        \ForAll{$\alpha\in\T$}{
          \If{ 
                 $\alpha$ is \emph{not} 
            x-local w.r.t. $\Sig \cup \widetilde{\M}$
\label{alg:cond}
} 
	   {
            $\M := \M \cup \{\alpha\}$\; \label{alg:than1}
           \label{alg:than2}$\T := \T \setminus \{\alpha\}$ \;
            changed = \texttt{true}
          }
        }
      }
      \Return{\M}
   }
\end{algorithm}
\footnotetext{\label{approx-locality}For efficiency, this test is
  approximated by a matching with a small set of templates.}

\begin{definition}\textbf{(Module, locality)}
  \label{def:mod-loc}
  A \emph{\tbs-substitution} for \KB and a signature
  \Sig is a substitution $\sigma$ over $\widetilde\KB \setminus
  \Sig$  that maps
  each concept name on $\top$ or $\bot$, and every role name on the
  universal role or the empty role.
  A strong axiom $\alpha$ is \emph{$\sigma$-local} iff
  $\sigma(\alpha)$ is a tautology. A DI $C\sqsubseteq_n D$ is
  $\sigma$-local iff $C\sqsubseteq D$ is $\sigma$-local. A set of
  axioms is $\sigma$-local if all of its members are.
We say that an axiom $\alpha$ is $\top$-local  (resp. $\perp$-local ) if $\alpha$ is $\sigma$-local where
  the substitution $\sigma$ uniformly maps concept names to $\top$
 (resp. $\perp$).

  A \emph{(syntactic) module} of  \KB with respect to
  \Sig is a set $\M \subseteq \KB$ such that $\KB \setminus \M$ is
  $\sigma$-local for some \tbs-substitution $\sigma$ for \KB and
  $\widetilde\M \cup \Sig$.
\end{definition}
	
Let $\Modd(\Sig,\KB)$ be the variant of \tbs-$\Mod(\Sig,\KB)$ where
the test in line \ref{alg:cond} is replaced with (the complement of) the
$\top$ or $\bot$-locality condition of Def.~\ref{def:mod-loc} (that covers
DIs, too). 
Using the original correctness argument for
\tbs-$\Mod(\Sig,\KB)$ cf. \cite[Prop. 42]{DBLP:journals/jair/GrauHKS08}, it is easy to see that $\Modd(\Sig,\KB)$ returns a
syntactic module of \KB w.r.t.\ \Sig according to
Def.~\ref{def:mod-loc}.  If \KB contains no DIs (i.e.\ it is classical), then
Def.~\ref{def:mod-loc} is essentially a rephrasing of standard
syntactic notions of modules and locality,\footnote{Informally, \tbs-Mod's greedy strategy tends to find small Def.~\ref{def:mod-loc}'s modules.} so
  \begin{equation}
   \label{mod-property} 
    \mbox{for all queries $\alpha$ such
      that $\widetilde\alpha \subseteq \Sig$, $\M \models \alpha$ iff $\KB
      \models \alpha$.}
  \end{equation}%

\noindent
However, proving that \tbs-$\Modd(\Sig,\KB)$ is correct for
\emph{full} \DLN is far from obvious: removing axioms from \KB using
module extractors is incorrect under most nonmonotonic semantics
(including circumscription, default logic and autoepistemic
logic). The reason is that nonmonotonic inferences are more powerful
than classical inferences, and the syntactic locality criterions
illustrated above fail to capture some of the dependencies between
different symbols.

\begin{example}
  Given the knowledge base $\{ \top \sqsubseteq A\sqcup B\}$ and
  $\Sig=\{A\}$, the module extractor returns an empty module (because
  by setting $B=\top$ the only axiom in the KB becomes a
  tautology). The circumscription of this KB, assuming that both $A$
  and $B$ are minimized, does not entail $A\sqsubseteq\bot$, while the
  circumscription of the empty module entails it.
\end{example}
\TRONLY{
\begin{example}
  Under the stable model semantics, the normal logic program
  \begin{eqnarray*}
    p & \leftarrow & \neg p, q \\
    q & \leftarrow & \neg r \\
    r & \leftarrow & \neg q
  \end{eqnarray*}
  entails $r$, both credulously and skeptically. The module extractor,
  given $\Sig=\{r\}$, removes the first rule (that becomes a tautology
  by setting $p=$true).  The module does not entail $r$ skeptically
  anymore, and erroneously entails $q$ credulously. Analogues of this
  example apply to default and autoepistemic logic, using the usual
  translations. They can be adapted to default DL
  \cite{DBLP:journals/jar/BaaderH95a} and the extension of DL based on
  MKNF \cite{DBLP:journals/tocl/DoniniNR02}.
\end{example}
}

Now we illustrate the correct way of applying \tbs-\Modd to a \DLN 
$\KB=\tmS\cup\D$ and a query $\alpha$ (subsumption or assertion). 
Let $\Sigma$ be the union of $\widetilde\alpha$ and the set of normality
concepts occurring in \KB.
Let 
{
  \[
  \M_0 = \Modd(\Sigma,\KB \cup \N\Sigma)\,,
  \]%
}%

\noindent
where $\N\Sigma$ abbreviates $\{\N C\sqsubseteq C \mid \N C \in \Sigma\}$.

\begin{example}
Let \KB be the knowledge base: 

\noindent
{\small
\begin{minipage}{0.5\textwidth}
\begin{eqnarray}
A &\sqsubseteq & B  \label{ex:a1}
\\
A &\sqsubseteq_n & D\sqcap E \label{ex:a2}
\end{eqnarray}
\end{minipage}
\begin{minipage}{0.5\textwidth}
\begin{eqnarray}
B\sqcap C &\sqsubseteq & A \label{ex:a3}
\\ 
F &\sqsubseteq_n & A \label{ex:a4}
\end{eqnarray}
\end{minipage}
}

\vspace{10pt}
\noindent
and $\alpha$ the query $\N A\sqsubseteq D$. 
$\M_0$ is calculated as follows: 
first, since no normality concept
occurs in $\KB$, $\Sigma$ is equal to the signature 
$\widetilde\alpha=\{\N A, D\}$.  

Algorithm  \ref{alg:tbs-mod} calls first the function
$\perp$-Mod($\KB\cup \N \Sigma$,$\Sigma$). 
Notice that by replacing $C$ and $F$ with $\perp$, axioms 
(\ref{ex:a3}) and (\ref{ex:a4}) becomes a tautology. Consequently, 
it is easy to see that the returned knowledge base is $\KB'=\{(\ref{ex:a1}), (\ref{ex:a2}), \N A \sqsubseteq  A\}$.

Then, $\top$-Mod is called on $\KB'$ and $\Sigma$. Now, 
replacing $B$ with $\top$ makes  $A \sqsubseteq  B$ a tautology,
so the resulting knowledge base is $\KB''=\{(\ref{ex:a2}), \N A \sqsubseteq  A\}$. It is easy to see that a fix point is reached and 
hence $\KB''$ is returned. 
\end{example}

We shall prove that $(\KB\cap\M_0)^\Sigma$ can be used in place of
$\KB^\Sigma$ to answer query $\alpha$. This saves the cost of
processing $\KBall^\Sigma \setminus \M$, where
{\small
  \[
  \M = (\KB^\Sigma_0 \cap \M_0) \cup \{\delta^{\N C} \mid \delta \in
  \D\cap\M_0,\ \N C\in\Sigma\}.
  \]%
}%

\noindent
Note that $\KBall^\Sigma \setminus \M$ is usually even larger
than $\KB\setminus\M_0$ because for each DI $\delta \not\in\M_0$,
all its translations $\delta^{\N C}$ ($\N C\in\Sigma$) are removed from \M.

\begin{lemma}
  \label{lem:M-module}
  \M is a module of $\KBall^\Sigma$ w.r.t.\ $\Sigma$.
\end{lemma}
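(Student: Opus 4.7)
The plan is to lift a \tbs-substitution witnessing the module property of $\M_0$ in $\KB \cup \N\Sigma$ to a \tbs-substitution that witnesses the module property of $\M$ in $\KBall^\Sigma$, and then check locality axiom-by-axiom in $\KBall^\Sigma \setminus \M$.

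First I would invoke the correctness of $\Modd$ (the easy adaptation of the correctness argument for \tbs-$\Mod$ referred to just before the lemma): there exists a \tbs-substitution $\sigma_0$ for $\KB \cup \N\Sigma$ and $\widetilde{\M_0} \cup \Sigma$ such that every axiom in $(\KB \cup \N\Sigma) \setminus \M_0$ is $\sigma_0$-local. I then record two bookkeeping identities that make $\sigma_0$ usable in the target setting: (i) $\widetilde{\KBall^\Sigma} = \widetilde{\KB \cup \N\Sigma}$, since both sides amount to $\widetilde{\tmS} \cup \widetilde{\D} \cup \Sigma \cup \bigcup_{\N C \in \Sigma}\widetilde{C}$; and (ii) $\widetilde{\M} \cup \Sigma = \widetilde{\M_0} \cup \Sigma$, since the only axioms of $\M$ not already present in $\M_0$ are the translations $\delta^{\N C}$ with $\delta \in \D \cap \M_0$, and $\widetilde{\delta^{\N C}} = \{\N C\} \cup \widetilde{\delta} \subseteq \Sigma \cup \widetilde{\delta}$, so no new concept or role names are introduced outside $\Sigma$. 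Together these say that $\sigma_0$ is already a \tbs-substitution for $\KBall^\Sigma$ and $\widetilde{\M} \cup \Sigma$.

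Next I split $\KBall^\Sigma \setminus \M$ into its two natural pieces and check $\sigma_0$-locality of each. The first piece is $\KB^\Sigma_0 \setminus \M_0 = (\tmS \cup \N\Sigma) \setminus \M_0$; every such axiom lives in $(\KB \cup \N\Sigma) \setminus \M_0$, so $\sigma_0$-locality is immediate. The second piece consists of the translations $\delta^{\N C} = \N C \sqcap \pre(\delta) \sqsubseteq \con(\delta)$ with $\delta \in \D \setminus \M_0$ and $\N C \in \Sigma$. Since $\delta \in \KB \setminus \M_0$, the axiom $\pre(\delta) \sqsubseteq \con(\delta)$ is $\sigma_0$-local, i.e.\ $\sigma_0(\pre(\delta)) \sqsubseteq \sigma_0(\con(\delta))$ is a tautology. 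Because $\N C \in \Sigma$, $\sigma_0$ fixes $\N C$, so
\[
\sigma_0(\delta^{\N C}) \;=\; \N C \sqcap \sigma_0(\pre(\delta)) \sqsubseteq \sigma_0(\con(\delta)),
\]
and this inclusion is a tautology by the trivial step $\N C \sqcap X \sqsubseteq X \sqsubseteq Y$ whenever $X \sqsubseteq Y$ is tautological. Hence $\delta^{\N C}$ is $\sigma_0$-local, and by Definition~\ref{def:mod-loc} $\M$ is a module of $\KBall^\Sigma$ w.r.t.\ $\Sigma$.

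The only really delicate step is the signature identity $\widetilde{\M} \cup \Sigma = \widetilde{\M_0} \cup \Sigma$; a careful enumeration of what each side contributes for axioms of $\tmS$, $\N\Sigma$, and $\D$ (respectively $\delta^{\N C}$) is needed so that we can use the \emph{same} $\sigma_0$ and stay within the domain allowed by Definition~\ref{def:mod-loc}. Once that is in place, the substantive nonmonotonic content of the lemma reduces to the trivial observation that conjoining $\N C$ to the left-hand side of a tautological inclusion preserves tautologicity.
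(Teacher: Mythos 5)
Your proof is correct and follows essentially the same route as the paper's: take the \tbs-substitution witnessing that $\M_0$ is a module of $\KB\cup\N\Sigma$, observe the signature identities $\sig(\KBall^\Sigma)=\sig(\KB\cup\N\Sigma)$ and $\widetilde\M\cup\Sigma=\widetilde{\M_0}\cup\Sigma$, and reduce locality of each $\delta^{\N C}\notin\M$ to locality of $\delta$ via the observation that conjoining $\N C$ on the left of a tautological inclusion preserves tautologicity. The only difference is that you spell out the signature bookkeeping in more detail than the paper does, which is harmless.
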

\TRONLY{
\begin{proof}
  Since $\Modd(\cdot,\cdot)$ returns modules, \hide{by
  Def.~\ref{def:mod-loc} the set} $(\KB \cup \N\Sigma) \setminus \M_0$
  is $\sigma$-local, for some \tbs-substitution $\sigma$ for $\KB \cup
  \N\Sigma$ and $\widetilde\M_0\cup\Sigma$. So, for all axioms $\beta$
  in $\KB^\Sigma_0 \setminus \M$, $\beta$ is $\sigma$-local (as
  $\KB^\Sigma_0 \setminus \M \subseteq (\KB \cup \N\Sigma) \setminus
  \M_0$).  Moreover, for all $\beta = \delta^{\N D} \in \{\delta^{\N
    C} \mid \delta \in \D,\ \N C \in \Sigma\} \setminus \M$, it must
  hold $\delta\in\D\setminus \M_0$ (by construction of \M), and hence
  $\delta$ is $\sigma$-local.  Now note that if $\sigma(E\sqsubseteq
  F)$ is a tautology, then also $\sigma(\N D \sqcap E\sqsubseteq F)$
  is a tautology, therefore the $\sigma$-locality of $\delta$ implies
  the $\sigma$-locality of $\delta^{\N D}$. It follows that all
  $\beta$ in $\KBall^\Sigma \setminus \M$ are $\sigma$-local.

  Finally, note that $\sig(\KBall^\Sigma) = \sig(\KB \cup \N\Sigma)$
  and $\widetilde\M \cup \Sigma = \widetilde\M_0 \cup \Sigma$, therefore
  $\sigma$ is also a \tbs-substitution for $\KBall^\Sigma$ and $\M
  \cup \Sigma$. It follows immediately that \M is a module of
  $\KBall^\Sigma$ w.r.t.\ $\Sigma$.
\qed
\end{proof}
}

\begin{lemma}
  \label{lem:sub-modules}
  If \M is a module of \KB w.r.t.\ a signature \Sig and $\KB' \subseteq \KB$, then $\KB' \cap \M$ is a module of $\KB'$  w.r.t.\ \Sig.
\end{lemma}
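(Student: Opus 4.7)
The plan is to lift the substitution witnessing that $\M$ is a module of $\KB$ to one witnessing that $\KB' \cap \M$ is a module of $\KB'$. By Definition~\ref{def:mod-loc}, I can fix a \tbs-substitution $\sigma$ for $\KB$ and $\widetilde\M \cup \Sig$---thus defined on $\widetilde\KB \setminus (\widetilde\M \cup \Sig)$ and mapping concept names to $\top$ or $\bot$ and role names to the universal or empty role---such that every axiom in $\KB \setminus \M$ becomes a tautology under $\sigma$. My goal is to construct a \tbs-substitution $\sigma'$ for $\KB'$ and $\widetilde{(\KB' \cap \M)} \cup \Sig$, i.e.\ defined on $\widetilde{\KB'} \setminus (\widetilde{(\KB' \cap \M)} \cup \Sig)$, under which every axiom of $\KB' \setminus (\KB' \cap \M) = \KB' \setminus \M$ is a tautology.

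The key observation is that $\widetilde{(\KB' \cap \M)} \subseteq \widetilde\M$, so the intended domain of $\sigma'$ splits into two parts: (a) symbols already in the domain of $\sigma$, i.e., those lying in $\widetilde\KB \setminus (\widetilde\M \cup \Sig)$, where I set $\sigma' := \sigma$; and (b) symbols in $\widetilde{\KB'} \cap (\widetilde\M \setminus \widetilde{(\KB' \cap \M)})$, which $\sigma$ does not touch, where I let $\sigma'$ send concept names to $\bot$ and role names to the empty role (any constant choice will do). Then for any $\beta \in \KB' \setminus \M$, we have $\beta \in \KB \setminus \M$ since $\KB' \subseteq \KB$, so $\sigma(\beta)$ is a tautology by hypothesis; and $\sigma'(\beta)$ is obtained from $\sigma(\beta)$ purely by further substituting the type-(b) symbols. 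Because DL tautologies are invariant under uniform replacement of concept or role names by $\top$, $\bot$, the universal role or the empty role (for any interpretation of the substituted axiom, the model obtained by evaluating the replaced symbol as the corresponding constant satisfies the original), $\sigma'(\beta)$ remains a tautology, and therefore $\beta$ is $\sigma'$-local. By Definition~\ref{def:mod-loc} this makes $\KB' \cap \M$ a module of $\KB'$ w.r.t.\ $\Sig$.

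The main delicacy in this argument is the signature bookkeeping: one has to notice that $\sigma$ may be undefined on part of $\sigma'$'s intended domain---precisely the symbols occurring in $\M$ but no longer in $\KB' \cap \M$---so $\sigma'$ cannot simply be the restriction of $\sigma$ but must be extended by fresh constant assignments. Once this extension is in place, the invariance of DL tautologies under such substitutions immediately transfers $\sigma$-locality to $\sigma'$-locality, and no further work is needed.
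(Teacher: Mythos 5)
Your proof is correct and follows essentially the same route as the paper's: both turn the \tbs-substitution $\sigma$ witnessing that \M is a module of \KB into a witness that $\KB'\cap\M$ is a module of $\KB'$, using that $\KB'\setminus\M\subseteq\KB\setminus\M$. You are in fact slightly more careful than the paper, which simply restricts $\sigma$ after asserting the identity $\widetilde\KB'\setminus(\widetilde\M\cup\Sig) = \widetilde\KB'\setminus(\sig(\KB'\cap\M)\cup\Sig)$ (which in general is only an inclusion, since a symbol may occur in $\M$ and in $\KB'$ without occurring in $\KB'\cap\M$); your extension of $\sigma$ by constant assignments on those leftover symbols, together with the observation that tautologies are preserved under further \tbs-substitution, closes that small gap.
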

\TRONLY{
\begin{proof}
  If \M is a module of \KB w.r.t.\ \Sig, then $\KB\setminus\M$ is
  $\sigma$-local for some \tbs-substitution $\sigma$ for \KB and
  $\widetilde\M \cup\Sig$.  Let $\sigma'$ be the restriction of $\sigma$
  to the symbols in 
  \[
  \widetilde\KB'\setminus(\widetilde\M\cup\Sig) = \widetilde\KB'\setminus(\sig(\KB'\cap\M)\cup\Sig).
  \]
  Clearly, $\sigma'$ is a \tbs-substitution for $\KB'$ and
  $\widetilde\KB'\setminus(\sig(\KB'\cap\M)\cup\Sig)$. Moreover, for all
  $\beta \in \KB'\setminus\M$, $\sigma(\beta) = \sigma'(\beta)$, by
  construction, so $\KB'\setminus\M$ is $\sigma'$-local. Then $\KB' \cap \M$ is a syntactic module of $\KB'$ w.r.t.\ \Sig.
\qed
\end{proof}
}

\noindent
The relationship between $(\KB\cap\M_0)^\Sigma$ and $\KB^\Sigma$ is:

\begin{lemma}
  \label{lem:mod-correctness}
  $\KB^\Sigma \cap \M \subseteq (\KB \cap \M_0)^\Sigma \subseteq \KB^\Sigma \,.$
\end{lemma}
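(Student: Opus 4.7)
The plan is to prove both inclusions simultaneously, by induction on the index $i$ of the construction sequences $\KB^\Sigma_i$ and $(\KB\cap\M_0)^\Sigma_i$. I fix one linearization $\delta_1,\dots,\delta_{|\D|}$ of $(\D,\prec)$ and use its natural restriction to $\D\cap\M_0$ as the linearization for the second construction; to keep a common index, I set $(\KB\cap\M_0)^\Sigma_i := (\KB\cap\M_0)^\Sigma_{i-1}$ whenever $\delta_i\notin\M_0$. The invariant at step $i$ is
\[
\KB^\Sigma_i \cap \M \;\subseteq\; (\KB\cap\M_0)^\Sigma_i \;\subseteq\; \KB^\Sigma_i,
\]
and the lemma is its instance for $i=|\D|$.

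For the base case, $\KB^\Sigma_0=\tmS\cup\N\Sigma$ and $(\KB\cap\M_0)^\Sigma_0=(\tmS\cap\M_0)\cup\N\Sigma$; since $\M$ and $\M_0$ agree on the strong axioms and on the $\N C\sqsubseteq C$ axioms of $\KB\cup\N\Sigma$, the double inclusion reduces to the set-theoretic chain $(\tmS\cap\M_0)\cup(\N\Sigma\cap\M_0)\subseteq(\tmS\cap\M_0)\cup\N\Sigma\subseteq\tmS\cup\N\Sigma$. In the inductive step, the case $\delta_i\notin\M_0$ is immediate: no translation $\delta_i^{\N C}$ belongs to $\M$, so $\KB^\Sigma_i\cap\M$ and $(\KB\cap\M_0)^\Sigma_i$ each coincide with their $(i{-}1)$-st versions and the invariant transfers.

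The substantive case is $\delta_i\in\M_0$, where for every $\N C\in\Sigma$ one must show that the two insertion tests used by the two constructions yield the same verdict. Let
\begin{align*}
X &= \KB^\Sigma_{i-1}\downarrow_{\prec \delta_i}\cup\{\delta_i^{\N C}\},\\
Y &= (\KB\cap\M_0)^\Sigma_{i-1}\downarrow_{\prec \delta_i}\cup\{\delta_i^{\N C}\},\\
Z &= (\KB^\Sigma_{i-1}\cap\M)\downarrow_{\prec \delta_i}\cup\{\delta_i^{\N C}\}.
\end{align*}
The inductive hypothesis gives $Z\subseteq Y\subseteq X$. Since $\downarrow_{\prec \delta_i}$ depends only on $\prec$ and $\delta_i^{\N C}\in\M$ (because $\delta_i\in\D\cap\M_0$), a short check yields $X\cap\M=Z$. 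As $X\subseteq\KBall^\Sigma$, Lemma \ref{lem:M-module} together with Lemma \ref{lem:sub-modules} shows that $Z$ is a module of $X$ w.r.t.\ $\Sigma$. Because $X$ contains no defeasible inclusions, property (\ref{mod-property}) applies and gives $X\models \N C\sqsubseteq\bot$ iff $Z\models \N C\sqsubseteq\bot$; sandwiched between $Z$ and $X$, the set $Y$ delivers the same answer. Hence $\delta_i^{\N C}$ is added to both $\KB^\Sigma_i$ and $(\KB\cap\M_0)^\Sigma_i$, or to neither, and the double inclusion is preserved.

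The main obstacle is exactly this last equivalence of consistency tests. The counterexamples recalled just before the lemma show that modularization generally fails for nonmonotonic logics, so the proof must pinpoint a purely \emph{classical} subproblem (entailment on the set $X$) to which the syntactic module property (\ref{mod-property}) genuinely applies, and match it with the right restricted module, namely $X\cap\M=Z$. Verifying $X\cap\M=Z$ and $X\subseteq\KBall^\Sigma$ is the technical heart of the argument; once these are in place, $Y$ is squeezed between two classical theories that agree on every query over $\Sigma$, and everything else is bookkeeping.
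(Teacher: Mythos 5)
Your proof is correct and follows essentially the same route as the paper's: the same induction invariant, the same base case, and the same key step of identifying $X\cap\M=Z$ as a module of the classical KB $X\subseteq\KBall^\Sigma$ via Lemmas~\ref{lem:M-module} and \ref{lem:sub-modules} and property~(\ref{mod-property}), then squeezing $Y$ between $Z$ and $X$. Your version merely makes explicit a few points the paper leaves implicit (the common linearization convention and the fact that $X$ is classical so that (\ref{mod-property}) applies).
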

\TRONLY{
\begin{proof}
  It suffices to prove by induction that
  for all $i=0,1,\ldots,|\D|$,\footnote{\label{modified-constr}Here the
    sets $(\KB\cap\M_0)^\Sigma_i$ are defined by replacing \tmS with
    $\tmS\cap\M_0$, and \KB with $\KB\cap\M_0$ in
    (\ref{KB-Sigma-constr-1}) and (\ref{KB-Sigma-constr-2}), while
    $\delta_i$ ranges over \emph{all} the DIs in \KB, not just
    $\D\cap\M_0$. This formulation facilitates the comparison with
    $\KB^\Sigma$. By the condition $\delta_i\in\KB$ in (\ref{KB-Sigma-constr-2})
    for all $\delta_i\not\in\M_0$, $(\KB\cap\M_0)^\Sigma_i =
    (\KB\cap\M_0)^\Sigma_{i-1}$, so this definition is equivalent to
    building $(\KB\cap\M_0)^\Sigma$ using only $\D \cap\M_0$.}
  \[
  \KB^\Sigma_i \cap \M \subseteq (\KB \cap \M_0)^\Sigma_i \subseteq \KB^\Sigma_i \,.
  \]

  \noindent
  \emph{Base case} ($i=0$): 
  {\small
  \begin{eqnarray*}
    \underline{\KB^\Sigma_0 \cap \M} & = & (\tmS \cup \N\Sigma) \cap \M  ~~\subseteq ~~ 
    (\tmS \cap \M)  \cup \N\Sigma ~~=
    \\
    &=&  (\tmS \cap \M_0)  \cup \N\Sigma ~~=~~ 
    \underline{(\KB \cap \M_0)^\Sigma_0} ~~\subseteq   
    \tmS \cup \N\Sigma ~~=~~ \underline{\KB^\Sigma_0} \,.
  \end{eqnarray*}%
  }%

  \noindent
  \emph{Induction step} ($i>0$): By induction hypothesis (IH)
  \[
  \KB^\Sigma_{i-1} \cap \M \subseteq (\KB \cap \M_0)^\Sigma_{i-1} 
  \subseteq \KB^\Sigma_{i-1} \,.
  \]
  First suppose that $\delta_i\not\in\M_0$ (and hence for all $\N C$, $\delta_i^{\N C}\not\in\M$). Then
  $\KB^\Sigma_i \cap \M = \KB^\Sigma_{i-1} \cap \M$, $(\KB \cap
  \M_0)^\Sigma_i = (\KB \cap \M_0)^\Sigma_{i-1}$, and (by def.)
  $\KB^\Sigma_{i-1} \subseteq \KB^\Sigma_i$. The Lemma follows by
  IH.

  Next assume that $\delta_i\in\M_0$ and let $\N C$ be any normality
  concept in $\Sigma$. Note that $\delta_i^{\N C}\in\M$. By IH, 
  {\small
    \[
    (\KB^\Sigma_{i-1} \cap \M) \downarrow_{\prec\delta_i} 
    \cup \{\delta_i^{\N C}\} \subseteq
    (\KB \cap \M_0)^\Sigma_{i-1} \downarrow_{\prec\delta_i} 
    \cup \{\delta_i^{\N C}\} \subseteq \KB^\Sigma _{i-1} \downarrow_{\prec\delta_i} 
    \cup \{\delta_i^{\N C}\} \,.
    \]%
  }%

\noindent
The leftmost term equals $(\KB^\Sigma _{i-1}
\downarrow_{\prec\delta_i} \cup \{\delta_i^{\N C}\}) \cap \M \subseteq \KBall^\Sigma$, so by
Lemmas \ref{lem:M-module} and \ref{lem:sub-modules} and (\ref{mod-property}), the leftmost term entails $\N C\sqsubseteq
\bot$ iff the rightmost does. It follows that the middle term $(\KB \cap
\M_0)^\Sigma_{i-1} \downarrow_{\prec\delta_i} \cup \{\delta_i^{\N C}\}$ entails $\N C\sqsubseteq \bot$ iff the other two terms do. Then, \hide{by
(\ref{KB-Sigma-constr-2}),} $\delta_i^{\N C}$ is added to $(\KB \cap
\M_0)^\Sigma_i$ iff $\delta_i^{\N C}$ belongs to $\KB^\Sigma_i \cap
\M$ and $\KB^\Sigma_i$, and the Lemma follows using the IH.
\qed
\end{proof}
}

\noindent
As a consequence, the modularized construction is correct:
\begin{theorem}
  \label{thm:mod-correctness}
  $(\KB\cap\M_0)^\Sigma \models \alpha$ iff $\KB^\Sigma \models \alpha$.
\end{theorem}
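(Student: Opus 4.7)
The plan is to sandwich $(\KB\cap\M_0)^\Sigma$ between a module of $\KB^\Sigma$ and $\KB^\Sigma$ itself, exploiting Lemma~\ref{lem:mod-correctness}, and then transfer the entailment via the module property~(\ref{mod-property}).

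First, the $\Rightarrow$ direction is immediate: by the right inclusion of Lemma~\ref{lem:mod-correctness}, $(\KB\cap\M_0)^\Sigma \subseteq \KB^\Sigma$, so by the monotonicity of classical entailment, $(\KB\cap\M_0)^\Sigma\models\alpha$ implies $\KB^\Sigma\models\alpha$.

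For the $\Leftarrow$ direction, assume $\KB^\Sigma\models\alpha$. The key observation is that $\KB^\Sigma \subseteq \KBall^\Sigma$ by construction (since $\KB^\Sigma$ is obtained from $\KBall^\Sigma$ by dropping some translated DIs). By Lemma~\ref{lem:M-module}, $\M$ is a module of $\KBall^\Sigma$ w.r.t.\ $\Sigma$; applying Lemma~\ref{lem:sub-modules} with $\KB' = \KB^\Sigma$ yields that $\KB^\Sigma \cap \M$ is a module of $\KB^\Sigma$ w.r.t.\ $\Sigma$. Since $\KB^\Sigma$ contains no DIs (it is a purely classical KB), property~(\ref{mod-property}) applies, and because $\widetilde\alpha \subseteq \Sigma$ holds by the very definition of $\Sigma$ (it includes $\widetilde\alpha$), we get $\KB^\Sigma \cap \M \models \alpha$ iff $\KB^\Sigma \models \alpha$. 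Hence $\KB^\Sigma\cap\M\models\alpha$.

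To finish, use the left inclusion of Lemma~\ref{lem:mod-correctness}: $\KB^\Sigma \cap \M \subseteq (\KB\cap\M_0)^\Sigma$. By monotonicity again, $(\KB\cap\M_0)^\Sigma\models\alpha$, as desired. The only subtlety worth double-checking is that the signature alignment in property~(\ref{mod-property}) really covers $\alpha$ — but this is built into the definition of $\Sigma$ at the start of the section, so no real obstacle remains; the argument is essentially a one-line assembly of the three lemmas.
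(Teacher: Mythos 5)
Your proposal is correct and follows essentially the same route as the paper: it combines Lemma~\ref{lem:M-module} and Lemma~\ref{lem:sub-modules} (applied to $\KB^\Sigma\subseteq\KBall^\Sigma$) with property~(\ref{mod-property}) to get $\KB^\Sigma\models\alpha$ iff $\KB^\Sigma\cap\M\models\alpha$, and then concludes via the sandwich of Lemma~\ref{lem:mod-correctness}. You merely spell out the monotonicity steps that the paper leaves implicit, and your checks (that $\KB^\Sigma$ is classical and that $\widetilde\alpha\subseteq\Sigma$) are both accurate.
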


\begin{proof}
  By Lemmas~\ref{lem:M-module} and \ref{lem:sub-modules}, and
  (\ref{mod-property}), $\KB^\Sigma \models \alpha$ iff $\KB^\Sigma
  \cap \M \models \alpha$. The Theorem then follows by
  Lemma~\ref{lem:mod-correctness}.
\qed
\end{proof}

  \section{Optimistic computation}
  \label{sec:optimistic-method}

The construction of $\KB^\Sigma$ repeats the concept consistency check
(\ref{KB-Sigma-constr-2}) over a sequence of knowledge bases
($\KB^\Sigma_{i-1}\downarrow_{\prec \delta_i} \cup\ \{\delta_i^{\N
  C}\}$) that share a (possibly large) common part $\KB_0^\Sigma$,
so incremental reasoning mechanisms help by avoiding multiple
computations of the consequences of $\KB_0^\Sigma$. On the contrary,
the set of $\delta_j^{\N C}$ may change significantly at each step due
to the filtering $\downarrow_{\prec \delta_i}$. This operation
requires many axiom deletions, which as already highlighted in \cite{DBLP:conf/dlog/KazakovK13}, 
are less efficient than monotonically increasing changes. The optimistic algorithm introduced
here (Algorithm~\ref{opt-alg}) computes a knowledge base $\KB^*$
equivalent to $\KB^\Sigma$ in a way that tends to reduce the number of deletions, as it will be assessed in Sec. \ref{sec:experiments}.

Phase~1 optimistically assumes that the DIs with the same priority as
$\delta_i^{\N C}$ do not contribute to entailing $\N C\sqsubseteq
\bot$ in (\ref{KB-Sigma-constr-2}), so they are not filtered with
$\downarrow_{\delta_i}$ in line \ref{t1}. Phase~2 checks whether the
DIs discarded during Phase~1 are actually overridden by applying
$\downarrow_{\delta_i}$ (lines \ref{t2} and \ref{t3}). DIs are
processed in non-increasing priority order as much as possible
(cf.\ line \ref{w1})
so as to exploit monotonic incremental classifications.

The following theorem shows the correctness of Alghorithm \ref{opt-alg} in case the normality concepts do not occur in \KB, but only in the queries. We call such knowledge bases \emph{N-free}. It is worth noting that the optimistic method is not generally correct when \KB is not N-free and $|\Sigma|>1$, yet it may still be applicable after the module extractor if the latter removes all normality concepts from \KB.

\begin{theorem}
  \label{thm:opt-correctness}
  If \KB is N-free, then Algorithm~\ref{opt-alg}'s output is equivalent to $\KB^\Sigma$.
\end{theorem}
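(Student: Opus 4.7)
The plan is to use the N-free hypothesis to decouple the analysis across normality concepts and then compare Phase 1 and Phase 2 to the standard construction priority class by priority class.

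First, since \KB is N-free, each $\N C \in \Sigma$ occurs only on the left-hand sides of $\tmS \cup \{\N C \sqsubseteq C\}$ and of the translations $\delta^{\N C}$. Consequently, for any set $X$ of translations, whether $\tmS \cup \N\Sigma \cup X \models \N C \sqsubseteq \bot$ depends only on the $\delta^{\N C}$'s in $X$; the translations $\delta^{\N D}$ with $\N D \neq \N C$ contribute nothing, because $\N D$ never occurs in a position from which inferences can propagate to $\N C$. This reduces correctness to a separate argument for each $\N C$. Fix one, let $A^\Sigma$ and $A^*$ be the sets of DIs whose $\N C$-translation ends up in $\KB^\Sigma$ and $\KB^*$ respectively, let $H^\Sigma(p)$ be the $\N C$-translations of standard-accepted DIs with priority strictly higher than $p$, and put $K^\Sigma_p := \tmS \cup \N\Sigma \cup H^\Sigma(p) \cup \{\delta^{\N C} \mid \delta \in A^\Sigma,\ \mathit{rank}(\delta) = p\}$.

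I would prove by induction on $p$ (highest priority first) the invariant: either (a) up to priority $p$ the Phase-1 acceptances of $\N C$-translations coincide with the standard ones and $K^\Sigma_p \not\models \N C \sqsubseteq \bot$, or (b) some $p^* \leq p$ satisfies $K^\Sigma_{p^*} \models \N C \sqsubseteq \bot$. The inductive step in case (a) rests on two monotonicity facts: Phase 1's current KB at priority $p$ is contained in $K^\Sigma_p$ (by the induction hypothesis), so any acceptance by Phase 1 also passes the standard test taken on a smaller KB, ruling out "over-acceptance"; and since $K^\Sigma_p$ is consistent, every standard-accepted priority-$p$ DI is consistent with Phase 1's current KB, so Phase 1 accepts it too. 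When case (a) fails at some $p^*$, Phase 1 may fail to accept all standard-accepted DIs at $p^*$, but Phase 2 repairs this: for any rejected $\delta$ at $p^*$, the strict filter $\downarrow_{\prec \delta}$ strips away Phase 1's same-priority and lower-priority admissions, leaving $\tmS \cup \N\Sigma \cup H^\Sigma(p^*)$—the very KB used by the standard test—so Phase 2 re-admits exactly the standard-accepted DIs that Phase 1 discarded. In particular $K^\Sigma_{p^*} \subseteq \KB^*$, hence $\KB^* \models \N C \sqsubseteq \bot$.

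Combining the cases gives the conclusion. In case (a) throughout, $A^* = A^\Sigma$ and the $\N C$-portions of the two KBs are literally identical. In case (b), both $\KB^\Sigma$ and $\KB^*$ entail $\N C \sqsubseteq \bot$ (the latter by Phase 2's rescue at $p^*$), and any "surplus" $\delta^{\N C}$ that Phase 1 retained at lower priorities has the form $\N C \sqcap \pre(\delta) \sqsubseteq \con(\delta)$, whose antecedent is empty and hence trivially satisfied, so it is irrelevant to the consequences; an analogous remark handles any asymmetry between $A^\Sigma$ and $A^*$ at priorities below $p^*$. Combining the per-$\N C$ results, and using N-freeness once more to exclude cross-concept interference, yields $\KB^* \equiv \KB^\Sigma$. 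The main technical obstacle I anticipate is making the invariant precise at the transition priority $p^*$: one must verify that the Phase-1 acceptances at priorities strictly higher than $p^*$ still equal the standard ones, so that Phase 2's strict filter really recovers $\tmS \cup \N\Sigma \cup H^\Sigma(p^*)$; this is exactly where the case-(a) portion of the induction is essential.
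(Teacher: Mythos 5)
Your overall strategy is sound and, in substance, the same as the paper's: you establish that Phase~1 never accepts a translation the standard construction rejects, that any standard-accepted translation Phase~1 rejects forces $\KB^\Sigma\models\N C\sqsubseteq\bot$, that Phase~2 then detects this inconsistency, and that once $\N C\sqsubseteq\bot$ holds all remaining discrepancies are trivially entailed. The paper packages these as five claims proved by induction on the loop iteration; your decoupling across normality concepts via N-freeness is also the paper's move (phrased there as $\bot$-locality of the $\delta^{\N E}$ with $E\neq C$). However, there are three concrete problems. First, your induction is organized by ``priority classes'' and uses $\mathit{rank}(\delta)=p$, which presupposes priority (\ref{specificity-2}); under priority (\ref{specificity}), $\prec$ is an arbitrary strict partial order in which incomparability is not transitive, so $H^\Sigma(p)$ and $K^\Sigma_p$ are not well defined and the induction must instead run over positions in the linearization (as the paper's does). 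Second, you misread Phase~2: when the filtered test at line~\ref{t2} or \ref{t3} succeeds, the algorithm does \emph{not} re-admit the discarded $\delta_i^{\N C}$ into $\KB^*$ --- it adds $\N C\sqsubseteq\bot$ and purges $\Delta$ of all $\N C$-entries. Hence your claim $K^\Sigma_{p^*}\subseteq\KB^*$ is false as a set inclusion; the needed conclusion $\KB^*\models\N C\sqsubseteq\bot$ still holds, but only because the bottom axiom is inserted explicitly, so this step needs to be restated.

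Third, and most importantly, your case~(a) is incomplete: you assert that when Phase-1 acceptances coincide with the standard ones the two knowledge bases are ``literally identical,'' but Phase~2 still runs on every entry of $\Delta$, and if any of its filtered tests succeeded it would wrongly insert $\N C\sqsubseteq\bot$ into $\KB^*$ even though $\KB^\Sigma\not\models\N C\sqsubseteq\bot$. You must show this cannot happen: since the acceptances coincide at all strictly higher priorities, $(\KB_0^\Sigma\cup\Pi)\downarrow_{\prec\delta_i}$ equals $\KB^\Sigma_{i-1}\downarrow_{\prec\delta_i}$, and every $\delta_i^{\N C}\in\Delta$ was (by coincidence) also rejected by the standard construction, so the filtered test does entail $\N C\sqsubseteq\bot$ and Phase~2 adds nothing. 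This is exactly the content of the paper's Claim~5 and it is a direction your sketch omits entirely. With these three repairs --- re-indexing the induction by linearization position, correcting the description of Phase~2, and adding the ``no spurious bottom'' direction --- your argument becomes essentially the paper's proof.
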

\TRONLY{
\begin{proof}
  First assume that $\Sigma$ is a singleton $\{\N C\}$. We start by
  proving some invariants of lines 6-10.

\textbf{Claim 1}: $\KB^\Sigma \models \Pi$.

\textbf{Claim 2}: If, for some $j<i$, $\delta_j^{\N C} \in \KB^\Sigma
\setminus \Pi$, then $\KB^\Sigma \models \N C \sqsubseteq \bot$.

We prove these two claims simultaneously.  Both claims hold vacuously
at the first execution of line 6. Next, assume by induction hypothesis
that they hold at line 6 in some iteration; we have to prove that they
still hold at the next iteration. There are two possibilities: First suppose
that for some $j<i$, $\delta_j^{\N C} \in \KB^\Sigma \setminus
\Pi$. By Claim~2, $\KB^\Sigma \models \N C \sqsubseteq \bot$. This
immediately implies that Claim~2 holds also at the next
iteration. Moreover, it implies Claim~1 because all members of $\Pi$
have an occurrence of $\N C$ in the left-hand side,
cf.\ (\ref{DI-trans}).

We are left with the case in which 
\begin{equation}
  \label{opt-corr-1}
  \mbox{ for all $j<i$, if $\delta_j^{\N C} \in \KB^\Sigma$
  then $\delta_j^{\N C} \in \Pi$. }
\end{equation}
If the condition in line \ref{t1} is true, then $\Pi$ is not changed, so
Claim~1 must hold at the next iteration. Otherwise, by
(\ref{opt-corr-1}), 
\begin{equation}
  \label{opt-corr-2}
  \KB^\Sigma_0 \cup \Pi' \supseteq
  \KB^\Sigma_{i-1}\downarrow_{\delta_i} \cup \{\delta_i^{\N C}\}\,, 
\end{equation}
and hence $\N C \sqsubseteq \bot$ is not provable in
(\ref{KB-Sigma-constr-2}), either.  It follows that $\delta_i^{\N C}$
belongs to both $\Pi$ (by line \ref{add1}) and $\KB^\Sigma$ (by
(\ref{KB-Sigma-constr-2})). This proves Claim~1 for iteration
$i$. 

Concerning Claim~2, first suppose that the condition in line \ref{t1}
is true; then either Claim~2 remains vacuously satisfied, or
$\delta_i^{\N C} \in \KB^\Sigma \setminus \Pi$. The latter (plus the
def.\ of $\KB^\Sigma$ and the ind.\ hyp.\ of Claim~1) implies that
$\KB^\Sigma_0 \cup \Pi'$ is entailed by $\KB^\Sigma$. It follows that
$\KB^\Sigma \models \N C \sqsubseteq \bot$ as well, which proves
Claim~2 in this case.  Finally, if the condition in line \ref{t1} is
false, then at line~\ref{add1} $\delta_i^{\N C} \in \Pi$. Together
with (\ref{opt-corr-1}), this implies that Claim~2 holds vacuously.

\textbf{Claim 3}: If $\KB^\Sigma \models \N C\sqsubseteq \bot$ then $\KB^* \models \N C \sqsubseteq \bot$.

Suppose not (we shall derive a contradiction). The assumption and
Claim~1 imply that there must be some $\delta_k^{\N C} \in \KB^\Sigma
\setminus \KB^*$. Let $\delta_i^{\N C}$ be the one with minimal
$k$. Using minimality, it can be proved that $\KB_0^\Sigma \cup \Pi
\downarrow_{\delta_i} = \KB^\Sigma_{i-1} \downarrow_{\delta_i}$, so the
concept consistency tests in lines \ref{t2} and \ref{t3} (the latter
instantiated with $j=i$ and $D=C$) are equivalent to the one in
(\ref{KB-Sigma-constr-2}). But then $\delta_i^{\N C}\in \KB^\Sigma$
iff $\delta_i^{\N C}\in \KB^*$, which contradicts the assumption, so
Claim~3 is proved.

\textbf{Claim 4}: If $\KB^\Sigma \models \N C\sqsubseteq \bot$ then
$\KB^* \equiv \KB^\Sigma$.

Note that $\KB^* \subseteq \KB^\Sigma_0 \cup \Pi \cup \{\N C
\sqsubseteq \bot\}$ (cf.\ lines 11, 15, and 22). Clearly, $\KB^\Sigma
\models \KB^\Sigma_0 \cup \Pi \cup \{\N C \sqsubseteq \bot\}$ (by
def., Claim~1 and the assumption), so $\KB^\Sigma \models \KB^*$.  We
are left to prove $\KB^* \models \KB^\Sigma$. By Claim~3, $\KB^*
\models \N C \sqsubseteq \bot$, and this inclusion in turn entails
all $\delta_i^{\N C}\in \KB^\Sigma$
(cf. (\ref{KB-Sigma-constr-1})). The other members of $\KB^\Sigma$ are
those in $\KB^\Sigma_0$, by definition, and $\KB^* \supseteq
\KB^\Sigma_0$ (line 11). It follows that $\KB^* \models \KB^\Sigma$,
which completes the proof of Claim~4.

\textbf{Claim 5}: If $\KB^\Sigma \not\models \N C\sqsubseteq \bot$ then
$\KB^* \equiv \KB^\Sigma$.

Suppose that $\KB^\Sigma \not\models \N C\sqsubseteq \bot$.  Then, by
the contrapositive of Claim~2 and Claim~1, $\KB^\Sigma_0\cup \Pi
\equiv \KB^\Sigma$, which further implies that the concept consistency
tests in lines \ref{t2} and \ref{t3} are equivalent to the
corresponding test in (\ref{KB-Sigma-constr-2}). Then it can be proved
that if any of these tests were true, then also $\KB^\Sigma \models \N
C \sqsubseteq \bot$ because, by $\Pi$'s construction, in that case $\delta_i$ must be in conflict with some other
DI with the same priority. However, $\KB^\Sigma \models \N C
\sqsubseteq \bot$ contradicts the assumption. It follows that all
tests in lines \ref{t2} and \ref{t3} are false, so
$\KB^*=\KB^\Sigma_0\cup \Pi$, and we have already argued that this
knowledge base is equivalent to $\KB^\Sigma$.
This completes the proof for
$|\Sigma|=1$.

For $|\Sigma|>1$, note that the tests in lines \ref{t1} and \ref{t2}
(resp. \ref{t3}) do not depend on any $\delta_k^{\N E}$ such that
$E\neq C$ (resp.\ $E\neq D$). Indeed, by \emph{$\bot$-locality}, all
such $\delta_k^{\N E}$ are local w.r.t.\ the signature of $\KB \cup
\{\N C \sqsubseteq \bot\}$, so they can be removed without changing
the result of the concept consistency test
\cite{DBLP:conf/dlog/SattlerSZ09}. However, after their removal, the
concept consistency tests correpond to the ones for the singleton case
$\Sigma=\{\N C\}$, which we have already proved correct.
\qed
\end{proof}
}

\begin{algorithm}
  \caption{Optimistic-Method \label{opt-alg}}
  \small
  \dontprintsemicolon
  \KwIn{$\KB=\tmS\cup\D$, $\Sigma$}
  \KwOut{a knowledge base $\KB^*$ such that $\KB^*\equiv\KB^\Sigma$}
  \BlankLine
  \tcp{Phase 1}
  compute a linearization $\delta_1,\ldots,\delta_{|\D|}$ of \D\;
  $\Pi := \emptyset$ \tcp{$\Pi$ collects the prototypes}
  $\Delta := \emptyset$ \tcp{ordered list of all discarded $\delta_i^{\N C}$}
  \For{$i=1,2,\ldots,|\D|$}{
    \For{$\N C \in \Sigma$}{
      $\Pi' := \Pi \cup \{\delta_i^{\N C}\}$\;
      \eIf{$\KB_0^\Sigma \cup \Pi' \not\models \N C\sqsubseteq \bot$\label{t1}}{
        $\Pi := \Pi'$ \label{add1}}{
        append $\delta_i^{\N C}$ to $\Delta$}
    }
  }
  \tcp{Phase 2}
  $\KB^* = \KB_0^\Sigma \cup \Pi$\;
  \While{$\Delta \neq \emptyset$}{
    extract from $\Delta$ its first element $\delta_i^{\N C}$\;
    \eIf{$(\KB_0^\Sigma \cup \Pi)\downarrow_{\prec \delta_i} 
      \cup \{\delta_i^{\N C}\} \not\models \N C\sqsubseteq \bot$ \label{t2}}{
      $\KB^* := \KB^* \cup \{\N C\sqsubseteq \bot\}$\;
      extract all $\delta_k^{\N E}$ with $E=C$ from $\Delta$
    }{
      \tcp{$\delta_i^{\N C}$ is actually overridden}
      $\delta := \delta_i$\;
      \While{$\Delta$ contains some $\delta_j^{\N D}$ such that 
        $\delta\prec\delta_j$ \label{w1}}{
        extract from $\Delta$ the first such $\delta_j^{\N D}$\;
        \If{$(\KB_0^\Sigma \cup \Pi)\downarrow_{\prec \delta_j} 
          \cup \{\delta_j^{\N D}\} \not\models \N D\sqsubseteq \bot$ \label{t3}}{
          $\KB^* := \KB^* \cup \{\N D\sqsubseteq \bot\}$\;
          extract all $\delta_k^{\N E}$ with $E=D$ from $\Delta$\;
          $\delta := \delta_j$\; \label{w2}
        }
      }
    }
  }
\end{algorithm}

  \section{Experimental assessment}
  \label{sec:experiments}

Currently there are no ``real'' KBs encoded in a nonmonotonic DL,
because standard DL technology does not support nonmonotonic
reasoning. The nonmonotonic KBs encoded in the hybrid rule+DL system
DLV-Hex \cite{DBLP:series/lncs/DrabentEIKLM09} are not suited to our
purposes because they do not feature default inheritance due to a
restriction of the language: DL predicates cannot occur in rule heads,
so rules cannot be used for encoding default inheritance. Consequently,
synthetic test cases are the only choice for evaluating our
algorithms. We start with the two test suites introduced in
\cite{DLN-15} as they have been proved to be nontrivial w.r.t.\ a
number of structural parameters, including nonclassical features like
exception levels and the amount of overriding. The two test suites are
obtained by modifying the popular Gene Ontology
(GO)\footnote{http://www.geneontology.org}, which contains 20465
atomic concepts and 28896 concept inclusions. In one test suite,
randomly selected axioms of GO are turned into DIs, while in the
second suite random synthetic DIs are injected in GO. The amount of
strong axioms transformed into DIs is controlled by
\emph{CI-to-DI-rate}, expressed as the percentage of transformed
axioms w.r.t.\ $|$GO$|$ while the amount of additional synthetic DIs
is controlled by \emph{Synthetic-DI-rate}, i.e.\ the ratio
$|\D|/|$GO$|$. The number of conflicts between DIs can be increased by
adding an amount of random disjointness axioms specified by parameter
\emph{DA-rate} (see \cite{DLN-15} for further details).

The experiments were performed on an Intel Core i7 2,5GHz laptop with
16 GB RAM and OS X 10.10.1, using Java 1.7 configured with 8 GB RAM and 3
GB stack space. Each reported value is the average execution time over
ten nonmonotonic ontologies and fifty queries on each ontology.
For each parameter setting, we report the execution time of: ($i$) the
naive \DLN reasoner of \cite{DLN-15}; ($ii$) the optimistic method
introduced in Sec.~\ref{sec:optimistic-method} (\opt); ($iii$) the
module extraction method of Sec.~\ref{sec:module-extractor} (\modex)
using the module extraction facility of the OWLAPI; ($iv$) the
sequential execution of \modex and \opt, i.e.\ Algorithm~\ref{opt-alg}
is applied to $\KB\cap\M_0$. This combined method is correct by
Theorem~\ref{thm:opt-correctness} and
Theorem~\ref{thm:mod-correctness}.

\begin{table}
  \begin{center}
    \footnotesize
    \begin{tabular}{|c||c|c|c|c|}
      \hline
      CI-to-DI & naive & opt & mod & mod+opt\\
      \hline
      \hline
      05\% & 12.91 & 05.93 & 00.30 & 00.25 \\
      10\% & 22.37 & 11.13 & 00.32 & 00.27 \\
      15\% & 31.50 & 15.90 & 00.37 & 00.32 \\
      20\% & 42.97 & 20.67 & 00.40 & 00.33 \\
      25\% & 55.22 & 25.17 & 00.44 & 00.36 \\
      \hline
    \end{tabular}
    \quad
    \begin{tabular}{|c||c|c|c|c|}
      \hline
      Synth DIs & naive & opt & mod & mod+opt\\
      \hline
      \hline
      05\% & 11.64 & 06.94 & 0.41 & 0.42 \\
      10\% & 21.66 & 11.21 & 0.62 & 0.67 \\
      15\% & 32.80 & 14.90 & 1.11 & 1.64 \\
      20\% & 41.51 & 18.82 & 2.01 & 1.42 \\
      25\% & 51.85 & 22.33 & 3.05 & 2.09 \\
      \hline
    \end{tabular}
  \end{center}
  \caption{Impact of $|\D|$ on performance (sec) -- DA rate = 15\% -- priority (\ref{specificity})}
  \label{tab:DI-rate}
\end{table}

Table~\ref{tab:DI-rate} shows the impact of the number of DIs
on response time for the two test suites, as DI rate ranges from 5\% to 25\%.
The methods \modex and \modex{}{}+\opt are slightly less effective in the second
suite probably because random
defaults connect unrelated parts of the ontology,
thereby hindering module extraction.
In both suites, \opt's speedup factor (w.r.t.\ the naive method) is
about two, while on average \modex is approximately 87 times faster in
the first test suite (max. speedup 125), and 28 times faster in the
second (max.\ speedup 35).  On average, the combined method yields a
further 13\% improvement over \modex alone; the maximum reduction is
31\% ($2^\mathrm{nd}$ suite, Synthetic-DI-rate=25\%, DA-rate=15\%).
The additional conflicts induced by injected disjointness axioms have moderate effects on response time (cf. Table~\ref{tab:DA-rate}).
\modex{}+\opt's average response time across both test suites is 0.7 sec.,
and the longest \modex{}+\opt response time has been 2.09 sec.  As a term of
comparison, a single classification of the original GO takes
approximately 0.4 seconds.

\ARTONLY{\vspace*{-1em}}
\begin{table}
  \begin{center}
    \footnotesize
    \begin{tabular}{|c||c|c|c|c|}
      \multicolumn{5}{c}{Test suite 1 (CI-to-DI)}\\
      \hline
      DA & naive & opt & mod & mod+opt\\
      \hline
      \hline
      05\% & 29.88 & 13.21 & 0.36 & 0.31 \\
      10\% & 32.96 & 14.08 & 0.37 & 0.32 \\
      15\% & 31.50 & 15.90 & 0.37 & 0.32 \\
      20\% & 34.23 & 16.23 & 0.39 & 0.33 \\
      25\% & 36.47 & 17.80 & 0.40 & 0.34 \\
      30\% & 37.71 & 18.09 & 0.40 & 0.34 \\
      \hline
    \end{tabular}
    \quad
    \begin{tabular}{|c||c|c|c|c|}
      \multicolumn{5}{c}{Test suite 2 (Synth.\ DIs)}\\
      \hline
      DA & naive & opt & mod & mod+opt\\
      \hline
      \hline
      05\% & 28.20 & 12.63 & 0.99 & 0.84 \\
      10\% & 30.18 & 13.68 & 1.04 & 0.97 \\
      15\% & 32.80 & 14.90 & 1.11 & 1.06 \\
      20\% & 35.68 & 16.29 & 1.18 & 1.10 \\
      25\% & 37.46 & 17.02 & 1.25 & 1.15 \\
      30\% & 38.37 & 18.79 & 1.36 & 1.23 \\
      \hline
    \end{tabular}
  \end{center}
  \caption{Impact of DAs on performance (sec) -- DI rate = 15\% -- priority (\ref{specificity})}
  \label{tab:DA-rate}
\end{table}

\ARTONLY{\vspace*{-4em}}

\begin{table}
  \begin{center}
    \footnotesize
    \begin{tabular}{|c||c|c|c|c|}
      \hline
      CI-to-DI & naive & opt & mod & mod+opt\\
      \hline
      \hline
      05\% & ~~22.01 & 05.74 & 00.30 & 00.25 \\
      10\% & ~~52.82 & 11.48 & 00.32 & 00.28 \\
      15\% & ~~81.84 & 16.56 & 00.34 & 00.31 \\
      20\% & 133.62 &  20.51 & 00.38 & 00.33 \\
      25\% & 193.27 &  26.42 & 00.41 & 00.36 \\
      \hline
    \end{tabular}
    \quad
    \begin{tabular}{|c||c|c|c|c|}
      \hline
      Synth DIs & naive & opt & mod & mod+opt\\
      \hline
      \hline
      05\% & 12.76 & 07.21 & 0.45 & 0.46 \\
      10\% & 23.72 & 14.44 & 0.81 & 0.86 \\
      15\% & 34.53 & 17.05 & 1.57 & 1.21 \\
      20\% & 44.92 & 21.77 & 2.67 & 1.96 \\
      25\% & 55.92 & 25.77 & 3.87 & 2.46 \\
      \hline
    \end{tabular}
  \end{center}
  \caption{Impact of $|\D|$ on performance (sec) -- DA rate = 15\% --
  priority (\ref{specificity-2})}
  \label{tab:DI-rate-2}
\end{table}
\vspace*{-2em}

Table~\ref{tab:DI-rate-2} is the analogue of Table~\ref{tab:DI-rate}
given priority (\ref{specificity-2}). With respect to priority
(\ref{specificity}), the computation time for $\KB^\Sigma$ and query
answering in the first test suite grows faster for the naive algorithm, while there are
smaller differences for the optimized approaches (the reponse times of the combined approach are almost identical). In the second test suite, the performance of the naive algorithms decreases less dramatically, while the optimized methods seem slightly less effective than in the first test suite. In all cases, the speedups of \modex and \modex-\opt  remain well above one order of magnitude.  The performance as DAs grow has similar features 
(see Table~\ref{tab:DA-rate-2}).

\begin{table}
  \begin{center}
    \footnotesize
    \begin{tabular}{|c||c|c|c|c|}
      \multicolumn{5}{c}{Test suite 1 (CI-to-DI)}\\
      \hline
      DA & naive & opt & mod & mod+opt\\
      \hline
      \hline
      05\% & 84.53 & 15.02 & 0.34 & 0.29 \\
      10\% & 90.38 & 16.12 & 0.35 & 0.30 \\
      15\% & 91.84 & 16.56 & 0.35 & 0.31 \\
      20\% & 92.93 & 16.67 & 0.36 & 0.31 \\
      25\% & 93.54 & 17.76 & 0.37 & 0.32 \\
      30\% & 96.37 & 19.49 & 0.38 & 0.33 \\
      \hline
    \end{tabular}
    \quad
    \begin{tabular}{|c||c|c|c|c|}
      \multicolumn{5}{c}{Test suite 2 (Synth.\ DIs)}\\
      \hline
      DA & naive & opt & mod & mod+opt\\
      \hline
      \hline
      05\% & 29.55 & 14.93 & 1.28 & 1.07 \\
      10\% & 30.81 & 15.82 & 1.41 & 1.15 \\
      15\% & 34.54 & 17.05 & 1.57 & 1.21 \\
      20\% & 36.79 & 16.93 & 1.62 & 1.27 \\
      25\% & 40.86 & 17.90 & 1.78 & 1.36 \\
      30\% & 43.35 & 18.74 & 1.79 & 1.34 \\
      \hline
    \end{tabular}
  \end{center}
  \caption{Impact of DAs on performance (sec) -- DI rate = 15\% --
  priority (\ref{specificity-2})}
  \label{tab:DA-rate-2}
\end{table}


The above test sets are N-free. We carried out a new set of
experiments by randomly introducing normality concepts
in DIs, within the scope of quantifiers.\footnote{\label{NC-role}So far, all the applicative examples that are not
  N-free satisfy this restriction, as apparently the only purpose of explicit normality concepts is restricting default role ranges to normal individuals,
  cf.\ Ex.~12 and the nomonotonic design pattern in
  \cite[Sec.~3.3]{DLN-15}.} Specifically, $\exists R.C$ is transformed into $\exists R.\N\,C$. 
The response times of the naive algorithm and \modex\footnote{In this
  setting \opt and \modex{}+\opt are not applicable, in general.} under priority (\ref{specificity}) are
listed in Table~\ref{tab:non-N-free} for increasing values of $|\Sigma|$ (that is directly related to the amount of normality concepts occurring in \KB).
We estimate that the values of $|\Sigma|$ considered here are
larger than what should be
expected in practice, given the specific role of explicit normality
concepts, cf.\ footnote~\ref{NC-role}. Such values are also much larger than in N-free experiments, where $|\Sigma|$ is
bounded by the query size.  Response times increase accordingly.
In most cases, the naive algorithm exceeded the
timeout. In the first test suite, \modex remains well below 1 minute; in the second suite it ranges between 100 seconds and 10 minutes. The reason of the higher computation times in the second suite is that the extracted modules are significantly larger, probably due to the random dependencies between concept names introduced by fully synthetic DIs.

\begin{table}
  \begin{center}
    \footnotesize
    \begin{tabular}{|c||c|c|c|c|c|}
      \hline
      $|\Sigma|$ & 50 & 100 & 150 & 200 & 250 \\
      \hline
      \multicolumn{6}{c}{Test suite 1}
      \\
      \hline
      naive & 1794.37 & $>$30 min. & $>$30 min. & $>$30 min. & $>$30 min. \\
      mod & 2.31 & 7.26 & 14.77 & 25.32 & 39.22 \\
      \hline
      \multicolumn{6}{c}{Test suite 2}
      \\
      \hline
      naive & $>$30 min. & $>$30 min. & $>$30 min. & $>$30 min. & $>$30 min. \\
      mod & 103.4 & 211.5 & 327.4 &  459.2 & 586.7  \\
      \hline
    \end{tabular}
  \end{center}
  \caption{Impact of normal roles values (sec) -- DI rate = 25\% DA rate = 15\%}
  \label{tab:non-N-free}
\end{table}

\vspace*{-3em}

  \section{Conclusions}
  \label{sec:conclusions}

The module-based and optimistic optimizations introduced here are
sound and complete, where the later applies only if the knowledge base is N-free. 
In our experiments, the combined method (when
applicable) and the module-based method make \DLN reasoning at least
one order of magnitude faster (and up to $\sim$780 times faster in
some case).
In most cases, optimized reasoning is compatible with real time \DLN
reasoning. This is the first time such performance is reached over
nonmonotonic KBs of this size: more than 20K concept names and over
30K inclusions.\footnote{Good results have been obtained also for KBs
  with $\sim$5200 inclusions under rational closure semantics
  \cite{DBLP:conf/dlog/CasiniMMV13,DBLP:conf/jelia/CasiniMMN14}.} Our
approach brings technology closer to practical nonmonotonic reasoning
with very large KBs. Only the random dependencies introduced by synthetic DIs, combined with numerous restrictions of role ranges to normal individuals, can raise response time over 40 seconds; in most of the other cases, computation time remains below 2 seconds.

We are currently exploring a more aggressive module extraction approach,
capable of eliminating some of the normality concepts in $\Sigma$ and
related axioms. Besides improving performance over non-N-free KBs, a
more powerful module extractor might enable the application of
the combined \modex{}+\opt method to non-N-free \DLN knowledge bases, by
removing all normality concepts from \KB before \opt is applied.

We are also planning to adopt a different module extractor
\cite{MaWa-DL14} that is promising to be faster than the OWLAPI
implementation.

Last but not least, we are progressively extending the set of
experiments by covering the missing cases and by widening the
benchmark set, using real ontologies different from GO as well as
thoroughly synthetic ontologies.

\vspace{10pt}
\noindent
Acknowledgements: 
The authors would like to thank the reviewers for their valuable comments and suggestions.
This work has been partially supported by the PRIN project Security Horizons.

\small

\bibliographystyle{abbrv}
\bibliography{nonmon-dl,modul}

\begin{thebibliography}{10}

\bibitem{DBLP:journals/jar/BaaderH95}
F.~Baader and B.~Hollunder.
\newblock Embedding defaults into terminological knowledge representation
  formalisms.
\newblock {\em J. Autom. Reasoning}, 14(1):149--180, 1995.

\bibitem{DBLP:journals/jar/BaaderH95a}
F.~Baader and B.~Hollunder.
\newblock Priorities on defaults with prerequisites, and their application in
  treating specificity in terminological default logic.
\newblock {\em J. Autom. Reasoning}, 15(1):41--68, 1995.

\bibitem{DBLP:conf/semweb/BonattiFS10}
P.~A. Bonatti, M.~Faella, and L.~Sauro.
\newblock {EL} with default attributes and overriding.
\newblock In {\em Int. Semantic Web Conf. (ISWC 2010)}, volume 6496 of {\em
  LNCS}, pages 64--79. Springer, 2010.

\bibitem{DBLP:conf/aaai/BonattiFS11}
P.~A. Bonatti, M.~Faella, and L.~Sauro.
\newblock Adding default attributes to {EL}++.
\newblock In W.~Burgard and D.~Roth, editors, {\em AAAI}. AAAI Press, 2011.

\bibitem{DBLP:journals/jair/BonattiFS11}
P.~A. Bonatti, M.~Faella, and L.~Sauro.
\newblock Defeasible inclusions in low-complexity {DL}s.
\newblock {\em J. Artif. Intell. Res. (JAIR)}, 42:719--764, 2011.

\bibitem{DBLP:journals/jair/BonattiLW09}
P.~A. Bonatti, C.~Lutz, and F.~Wolter.
\newblock The complexity of circumscription in {DL}s.
\newblock {\em J. Artif. Intell. Res. (JAIR)}, 35:717--773, 2009.

\bibitem{DLN-15}
P.~A. Bonatti, I.~M. Petrova, and L.~Sauro.
\newblock A new semantics for overriding in description logics.
\newblock {\em Artificial Intelligence}, 222:1--48, 2015.
\newblock Available online:
  \url{http://www.sciencedirect.com/science/article/pii/S0004370215000028}.

\bibitem{DBLP:conf/jelia/CasiniMMN14}
G.~Casini, T.~Meyer, K.~Moodley, and R.~Nortje.
\newblock Relevant closure: {A} new form of defeasible reasoning for
  description logics.
\newblock In E.~Ferm{\'{e}} and J.~Leite, editors, {\em Logics in Artificial
  Intelligence - 14th European Conference, {JELIA} 2014, Funchal, Madeira,
  Portugal, September 24-26, 2014. Proceedings}, volume 8761 of {\em Lecture
  Notes in Computer Science}, pages 92--106. Springer, 2014.

\bibitem{DBLP:conf/dlog/CasiniMMV13}
G.~Casini, T.~Meyer, K.~Moodley, and I.~J. Varzinczak.
\newblock Towards practical defeasible reasoning for description logics.
\newblock In T.~Eiter, B.~Glimm, Y.~Kazakov, and M.~Kr{\"o}tzsch, editors, {\em
  Description Logics}, volume 1014 of {\em CEUR Workshop Proceedings}, pages
  587--599. CEUR-WS.org, 2013.

\bibitem{DBLP:conf/jelia/CasiniS10}
G.~Casini and U.~Straccia.
\newblock Rational closure for defeasible description logics.
\newblock In T.~Janhunen and I.~Niemel{\"a}, editors, {\em JELIA}, volume 6341
  of {\em Lecture Notes in Computer Science}, pages 77--90. Springer, 2010.

\bibitem{DBLP:journals/jair/CasiniS13}
G.~Casini and U.~Straccia.
\newblock Defeasible inheritance-based description logics.
\newblock {\em J. Artif. Intell. Res. (JAIR)}, 48:415--473, 2013.

\bibitem{DBLP:journals/tocl/DoniniNR02}
F.~M. Donini, D.~Nardi, and R.~Rosati.
\newblock Description logics of minimal knowledge and negation as failure.
\newblock {\em ACM Trans. Comput. Log.}, 3(2):177--225, 2002.

\bibitem{DBLP:series/lncs/DrabentEIKLM09}
W.~Drabent, T.~Eiter, G.~Ianni, T.~Krennwallner, T.~Lukasiewicz, and
  J.~Maluszynski.
\newblock Hybrid reasoning with rules and ontologies.
\newblock In F.~Bry and J.~Maluszynski, editors, {\em Semantic Techniques for
  the Web, The {REWERSE} Perspective}, volume 5500 of {\em Lecture Notes in
  Computer Science}, pages 1--49. Springer, 2009.

\bibitem{BaaCa10}
B.~F., C.~D., M.~D., D.~NARDI, and P.~S. P.
\newblock The description logic handbook, theory, implementation, and
  applications (2nd edition).
\newblock In {\em The Description Logic Handbook}, pages 555--555. Cambridge
  University Press, CAMBRIDGE, 2010.

\bibitem{GhilardiLutzWolter-KR06}
S.~{Ghilardi}, C.~{Lutz}, and F.~{Wolter}.
\newblock Did {I} damage my ontology? a case for conservative extensions in
  description logics.
\newblock In P.~{Doherty}, J.~{Mylopoulos}, and C.~{Welty}, editors, {\em
  Proceedings of the Tenth International Conference on Principles of Knowledge
  Representation and Reasoning (KR'06)}, pages 187--197. AAAI Press, 2006.

\bibitem{DBLP:journals/ai/GiordanoGOP13}
L.~Giordano, V.~Gliozzi, N.~Olivetti, and G.~L. Pozzato.
\newblock A non-monotonic description logic for reasoning about typicality.
\newblock {\em Artif. Intell.}, 195:165--202, 2013.

\bibitem{DBLP:journals/fuin/GiordanoOGP09}
L.~Giordano, N.~Olivetti, V.~Gliozzi, and G.~L. Pozzato.
\newblock {ALC + T}: a preferential extension of description logics.
\newblock {\em Fundam. Inform.}, 96(3):341--372, 2009.

\bibitem{DBLP:journals/jair/GrauHKS08}
B.~C. Grau, I.~Horrocks, Y.~Kazakov, and U.~Sattler.
\newblock Modular reuse of ontologies: Theory and practice.
\newblock {\em J. Artif. Intell. Res. {(JAIR)}}, 31:273--318, 2008.

\bibitem{DBLP:conf/dlog/KazakovK13}
Y.~Kazakov and P.~Klinov.
\newblock Incremental reasoning in {EL+} without bookkeeping.
\newblock In {\em Description Logics}, pages 294--315, 2013.

\bibitem{DBLP:conf/kr/KontchakovWZ08}
R.~Kontchakov, F.~Wolter, and M.~Zakharyaschev.
\newblock Can you tell the difference between dl-lite ontologies?
\newblock In G.~Brewka and J.~Lang, editors, {\em Principles of Knowledge
  Representation and Reasoning: Proceedings of the Eleventh International
  Conference, {KR} 2008, Sydney, Australia, September 16-19, 2008}, pages
  285--295. {AAAI} Press, 2008.

\bibitem{DBLP:journals/ai/Lukasiewicz08}
T.~Lukasiewicz.
\newblock Expressive probabilistic description logics.
\newblock {\em Artif. Intell.}, 172(6-7):852--883, 2008.

\bibitem{DBLP:conf/ijcai/LutzWW07}
C.~Lutz, D.~Walther, and F.~Wolter.
\newblock Conservative extensions in expressive description logics.
\newblock In M.~M. Veloso, editor, {\em {IJCAI} 2007, Proceedings of the 20th
  International Joint Conference on Artificial Intelligence, Hyderabad, India,
  January 6-12, 2007}, pages 453--458, 2007.

\bibitem{MaWa-DL14}
F.~{Martin-Recuerda} and D.~{Walther}.
\newblock Axiom dependency hypergraphs for fast modularisation and atomic
  decomposition.
\newblock In M.~{Bienvenu}, M.~{Ortiz}, R.~{Rosati}, and M.~{Simkus}, editors,
  {\em Proceedings of the 27th International Workshop on Description Logics
  ({DL'14})}, volume 1193 of {\em CEUR Workshop Proceedings}, pages 299--310,
  2014.

\bibitem{DBLP:conf/psb/Rector04}
A.~L. Rector.
\newblock Defaults, context, and knowledge: Alternatives for {OWL}-indexed
  knowledge bases.
\newblock In {\em Pacific Symposium on Biocomputing}, pages 226--237. World
  Scientific, 2004.

\bibitem{DBLP:journals/ai/Sandewall10}
E.~Sandewall.
\newblock Defeasible inheritance with doubt index and its axiomatic
  characterization.
\newblock {\em Artif. Intell.}, 174(18):1431--1459, 2010.

\bibitem{DBLP:conf/dlog/SattlerSZ09}
U.~Sattler, T.~Schneider, and M.~Zakharyaschev.
\newblock Which kind of module should {I} extract?
\newblock In B.~C. Grau, I.~Horrocks, B.~Motik, and U.~Sattler, editors, {\em
  Proceedings of the 22nd International Workshop on Description Logics {(DL}
  2009), Oxford, UK, July 27-30, 2009}, volume 477 of {\em {CEUR} Workshop
  Proceedings}. CEUR-WS.org, 2009.

\bibitem{DBLP:journals/ijmms/StevensAWSDHR07}
R.~Stevens, M.~E. Aranguren, K.~Wolstencroft, U.~Sattler, N.~Drummond,
  M.~Horridge, and A.~L. Rector.
\newblock Using {OWL} to model biological knowledge.
\newblock {\em International Journal of Man-Machine Studies}, 65(7):583--594,
  2007.

\bibitem{DBLP:journals/csec/WooL93}
T.~Y.~C. Woo and S.~S. Lam.
\newblock Authorizations in distributed systems: {A} new approach.
\newblock {\em Journal of Computer Security}, 2(2-3):107--136, 1993.

\end{thebibliography}

\newpage

\end{document}